  \providecommand\BibTeX{{%
    \normalfont B\kern-0.5em{\scshape i\kern-0.25em b}\kern-0.8em\TeX}}}
\newcommand{\ry}[1] {{\color{magenta} [RY: #1] } }
\newcommand{\out}[1]{}
\def\eqref#1{equation~\ref{#1}}
\def\1{\bm{1}}
\def\rd{{\textnormal{d}}}
\DeclareMathAlphabet{\mathsfit}{\encodingdefault}{\sfdefault}{m}{sl}
\SetMathAlphabet{\mathsfit}{bold}{\encodingdefault}{\sfdefault}{bx}{n}
\newcounter{assumption}%
\renewcommand{\theassumption}{\arabic{assumption}}
\def\rd{\mathrm{d}}
\def\rT{{\rm T}}
\def\lX{\mathcal{X}}
\def\lY{\mathcal{Y}}
\def\normal{\mathcal{N}}
\newcommand*\E[1]{\mathbb{E}\left[#1\right]}
\newcommand*\Ep[2]{\mathbb{E}_{#1}\left[#2\right]}
\newcommand*\lrbb[1]{\left\{#1\right\}}
\newcommand*\lrp[1]{\left(#1\right)}
\newcommand*\ind[1]{{\mathds{1}\lrbb{#1}}}
\def\rd{\mathrm{d}}
\def\rT{{\rm T}}
\newcommand{\real}{\ensuremath{\mathbb{R}}}
\newcommand{\MIS}{\mathrm{MIS}}
\newcommand{\NORMAL}{\ensuremath{\mathcal{N}}}
\newtheorem{proposition}{Proposition}
\newcommand{\ts}{\textsuperscript}
\DeclareMathOperator*{\argmin}{argmin}
\newcommand{\V}[1]{{\mathbf{#1}}} % vector
\newcommand{\xmark}{$\times$}%
\renewcommand{\edits}[1]{{\color{black}#1}}
\begin{document}
\settopmatter{authorsperrow=4}
%%
%% The "title" command has an optional parameter,
%% allowing the author to define a "short title" to be used in page headers.
\title{Quantifying Uncertainty in Deep Spatiotemporal Forecasting}

%%
%% The "author" command and its associated commands are used to define
%% the authors and their affiliations.
%% Of note is the shared affiliation of the first two authors, and the
%% "authornote" and "authornotemark" commands
%% used to denote shared contribution to the research.
\author{Dongxia Wu}
% \authornote{Both authors contributed equally to this research.}
% \authornotemark[1]
\affiliation{%
  \institution{UC San Diego}
  \city{La Jolla}
  \state{CA}
  \country{USA}
}
\email{dowu@ucsd.edu}

\author{Liyao Gao}
\affiliation{%
  \institution{University of Washington}
  \city{Seattle}
  \state{WA}
  \country{USA}
  }
\email{marsgao@uw.edu}

\author{Xinyue Xiong}
\affiliation{%
 \institution{Northeastern University}
 \city{Boston}
 \state{MA}
 \country{USA}
}
\email{xiong.xin@northeastern.edu}

\author{Matteo Chinazzi}
\affiliation{%
 \institution{Northeastern University}
 \city{Boston}
 \state{MA}
 \country{USA}
 }
 \email{m.chinazzi@northeastern.edu}

\author{Alessandro Vespignani}
\affiliation{%
  \institution{Northeastern University}
  \city{Boston}
  \state{MA}
  \country{USA}
 }
 \email{a.vespignani@northeastern.edu}

\author{Yi-An Ma}
\affiliation{
  \institution{UC San Diego}
  \city{La Jolla}
  \state{CA}
  \country{USA}
}
\email{yianma@ucsd.edu}

\author{Rose Yu}
\affiliation{
  \institution{UC San Diego}
  \city{La Jolla}
  \state{CA}
  \country{USA}
}
\email{roseyu@ucsd.edu}
%%
%% By default, the full list of authors will be used in the page
%% headers. Often, this list is too long, and will overlap
%% other information printed in the page headers. This command allows
%% the author to define a more concise list
%% of authors' names for this purpose.
\renewcommand{\shortauthors}{Wu, et al.}

%%
%% The abstract is a short summary of the work to be presented in the
%% article.Quantifying uncertainty is critical to risk assessment and decision making in high stakes domains.with a limited amount of computation, 
\begin{abstract}
 Deep learning is gaining increasing popularity for spatiotemporal forecasting. However,  prior works have mostly focused on point estimates without quantifying the uncertainty of the predictions. In high stakes domains, being able to generate probabilistic forecasts with confidence intervals is critical to risk assessment and decision making. Hence, a systematic study of uncertainty quantification (UQ) methods for spatiotemporal forecasting is missing in the community. In this paper, we describe two types of spatiotemporal forecasting problems: regular grid-based and graph-based.  Then we analyze  UQ methods from both the Bayesian and the frequentist point of view, casting in a unified framework via statistical decision theory. Through extensive experiments on real-world road network traffic, epidemics, and air quality forecasting tasks, we reveal the statistical and computational trade-offs for different UQ methods: Bayesian methods are typically more robust in mean prediction, while confidence levels obtained from frequentist methods provide more extensive coverage over data variations. Computationally, quantile regression type methods are cheaper for a single confidence interval but require re-training for different intervals. Sampling based methods generate samples that can form multiple confidence intervals, albeit at a higher computational cost. 
\end{abstract}

%%
%% The code below is generated by the tool at http://dl.acm.org/ccs.cfm.
%% Please copy and paste the code instead of the example below.
%%
% \begin{CCSXML}
% <ccs2012>
%  <concept>
%   <concept_id>10010520.10010553.10010562</concept_id>
%   <concept_desc>Computer systems organization~Embedded systems</concept_desc>
%   <concept_significance>500</concept_significance>
%  </concept>
%  <concept>
%   <concept_id>10010520.10010575.10010755</concept_id>
%   <concept_desc>Computer systems organization~Redundancy</concept_desc>
%   <concept_significance>300</concept_significance>
%  </concept>
%  <concept>
%   <concept_id>10010520.10010553.10010554</concept_id>
%   <concept_desc>Computer systems organization~Robotics</concept_desc>
%   <concept_significance>100</concept_significance>
%  </concept>
%  <concept>
%   <concept_id>10003033.10003083.10003095</concept_id>
%   <concept_desc>Networks~Network reliability</concept_desc>
%   <concept_significance>100</concept_significance>
%  </concept>
% </ccs2012>
% \end{CCSXML}

% \ccsdesc[500]{Computer systems organization~Embedded systems}
% \ccsdesc[300]{Computer systems organization~Redundancy}
% \ccsdesc{Computer systems organization~Robotics}
% \ccsdesc[100]{Networks~Network reliability}

%%
%% Keywords. The author(s) should pick words that accurately describe
%% the work being presented. Separate the keywords with commas.
\keywords{uncertainty quantification, deep neural networks, spatiotemporal forecasting}

%% A "teaser" image appears between the author and affiliation
%% information and the body of the document, and typically spans the
%% page.

%%
%% This command processes the author and affiliation and title
%% information and builds the first part of the formatted document.
\copyrightyear{2021}
\acmYear{2021}
\acmConference[KDD '21]{Proceedings of the 27th ACM SIGKDD Conference on Knowledge Discovery and Data Mining}{August 14--18, 2021}{Virtual Event, Singapore}
\acmBooktitle{Proceedings of the 27th ACM SIGKDD Conference on Knowledge Discovery and Data Mining (KDD '21), August 14--18, 2021, Virtual Event, Singapore}\acmDOI{10.1145/3447548.3467325}
\acmISBN{978-1-4503-8332-5/21/08}
\fancyhead{}
\maketitle

\section{Introduction}

% \yian{incorporate with next paragraph}Building accurate and trustworthy deep learning systems requires methods for reasoning and making decisions under uncertainty. In high-stake domains, especially  public health and transportation,  uncertainty estimation is a critical component to assist risk assessment for policy makers. We can also use the uncertainty to calibrate predictions and provide guidance for decision making.
% Despite impressive prediction performances, deep neural networks are poor at providing  uncertainty estimation for the predictions \citep{amodei2016concrete}. Deep neural networks tend to produce overconfident predictions, which can be harmful or even offensive.
Forecasting the spatiotemporal evolution of time series data is at the core of data science. Spatiotemporal forecasting exploits the spatial correlations to improve the forecasting performance for multivariate time series \cite{cressie2015statistics}. Many works have used deep learning for spatiotemporal forecasting, applied to various domains from weather \cite{xingjian2015convolutional,yi2018deep} to traffic \cite{li2018diffusion,geng2019spatiotemporal} forecasting. However, the majority of existing works focus on point estimates without generating confidence intervals. The resulting predictions fail to provide credibility for assessing potential risks and assisting decision making, which is critical especially in high-stake domains such as public health and transportation.

% Uncertainty quantification (UQ),  especially , is  critical  to assist risk assessment for policy makers.
 %We can also use the uncertainty to calibrate predictions and provide guidance for decision making. Despite impressive prediction performances,
It is known that deep neural networks are poor at providing uncertainty and tend to produce overconfident predictions \citep{amodei2016concrete}.  Therefore, uncertainty quantification (UQ) is receiving growing interests in deep learning~\citep{gal2016dropout, vandal2018quantifying, qiu2019quantifying, wen2017multi}. 
% Major attention has been devoted to combining the deep learning's success in accuracy with statistical calibration to generate both accurate and trustworthy predictions. 
There are roughly two types of UQ methods in deep learning. One leverages frequentist thinking and focuses on robustness. Perturbations are made to the inference procedure in initialization ~\citep{lakshminarayanan2017simple,fort2019ensemble},  neural network weights~\citep{gal2016dropout}, and datasets~\citep{lee2015m,osband2016DQN}. The other type is Bayesian, which aims to model posterior beliefs of network parameters given the data \citep{neal2012bayesian,kingma2013auto,heek2019}. 
A common focal point of both approaches is the generalization ability of the neural networks. In forecasting, we not only wish to explain in-distribution examples, but also generalize to future data with model misspecifications and distribution shifts. Time series data provide the natural laboratory for studying this problem.

% \citep{,guo2017calibration}

% \yian{Incorporate into the previous paragraph} However,  most  UQ methods are concerned with point predictions instead of sequence predictions.  For example,  Bayesian neural networks \citep{neal2012bayesian} are principled UQ methods but suffer from  prohibitive computational cost and difficult implementation. Approximate Bayesian NN methods such as variational inference \citep{kingma2013auto,dusenberry2020} and Monte Carlo Dropout \citep{gal2016dropout, vandal2018quantifying} are easy to implement,  but do not provide the exact likelihood. \cite{lakshminarayanan2017simple,pearce2018high} propose  model ensemble  but rely on strong assumptions.  \cite{qiu2019quantifying} bypasses the retraining process by modeling the prediction residuals as a Gaussian process.  

% We focus on probabilistic spatiotemporal forecasting with deep learning. Recent advances in deep spatiotemporal forecasting have greatly improved point prediction accuracy, e.g. \citep{wang2017predrnn, yu2018spatio, li2018diffusion}. (3) Sample size: deep learning suffers in small sample regime, incurring large uncertainty in predictions.  

Spatiotemporal sequence data pose unique challenges to UQ: (1) Spatiotemporal dependency: accurate forecasts require models that can capture both spatial and temporal dependencies.   (2)  Evaluation metrics: point estimates often use mean squared error (MSE) as a metric, which  only measures the averaged point-wise error for the sequence.   For uncertainty, common metrics such as held-out log-likelihood require exact likelihood computation. The number of possibilities grows  exponentially with the length of the sequence.  (3) Forecasting horizon: the sequential dependency in time series often leads to error propagation for long-term forecasting, so does uncertainty. Unfortunately, most existing deep UQ methods deal with scalar classification/regression problems   with a few exceptions. \cite{vandal2018quantifying} proposes to use concrete dropout for climate down-scaling but not for forecasting.  \cite{wang2019deep} proposes a Bayesian neural network based on variational inference. Still, a systematic study of UQ methods for spatiotemporal forecasting with deep learning is missing in the data science community.

%  the evaluation metrics,

In this paper, we conduct a  benchmark study of deep uncertainty quantification for spatiotemporal forecasting. We investigate spatiotemporal forecasting problems on a regular grid as well as on a graph. We compare 2 types of UQ methods: frequentist methods including Bootstrap, Quantile regression, Spline Quantile regression and Mean Interval Score regression;  and Bayesian methods including Monte Carlo dropout, and Stochastic Gradient Markov chain Monte Carlo (SG-MCMC).  We analyze the properties of both frequentist and Bayesian  UQ methods, as well as their practical performances. We further provide a recipe for practitioners when tackling UQ problems in deep spatiotemporal forecasting.  We experimented with many real-world forecasting tasks to validate our hypothesis: road network traffic forecasting, epidemic forecasting, and air quality prediction.

Our contributions include:
\begin{itemize}
    \item  We conduct the first systematic benchmark study for deep learning uncertainty quantification (UQ) in the context of multi-step spatiotemporal forecasting.
    \item  We cast frequentist and Bayesian UQ  methods under a unified framework and adapt 6   UQ methods for grid-based and graph-based spatiotemporal forecasting problems.
    \item Our study reveals that posterior sampling excels at mean prediction whereas quantile and mean interval score regression obtain confidence levels that cover data variations better.
    % \item Quantile regression outperforms its approximate counterpart; posterior sampling outperforms Monte Carlo dropout, thus signifying the utility of fidelity to the probability scores and the posterior.
    \item We also notice that the sample complexity of posterior sampling is lower than that of the bootstrapping method, potentially due to posterior contraction~\citep{wilson2020bayesian}.
\end{itemize}

\section{Related Work}
% \yian{Actually I have already included a discussion on Bayesian methods for UQ in Sec. 4.4} Most existing UQ methods  for deep learning are point predictions instead of sequences.  For instance,  \citet{neal2012bayesian, blundell2015weight} use Bayesian neural networks to model the weight uncertainty,  but Bayesian NNs  suffer from expensive computational cost. Approximation techniques such as  dropout NN \citep{gal2016dropout} or Variational Inference (VI) \citep{kingma2013auto,liu2016stein} have greatly increased the scalability and usefulness of these approaches.  but they  do not provide exact likelihood.  
% \citet{qiu2019quantifying} proposes  a residual estimation technique RIO by modeling the prediction residual with a Gaussian process prior. 

% \citep{blundell2015weight,graves2011practical,louizos2016structured,rezende2014stochastic}

% A number of works 

\subsection{Spatiotemporal Forecasting} 
\out{
Classic time series models such as ARMA or  ARIMA were developed for univariate time series~(see \citep{hyndman2018forecasting} and the references therein. For multivariate time series, \citep{yu2016temporal} exploit information sharing across variables by applying matrix factorization. \citep{salinas2020deepar, wang2019deepfactor} introduce latent variables and use RNNs to approximate the parameters in the likelihood functions. These probabilistic forecasting methods provide uncertainty directly, but assumes factorizability over time or exchangeability over space on the model likelihood function, neither holds true in our current setting. \citep{gasthaus2019probabilistic} proposes spline quantile function.  
}
Spatiotemporal forecasting is challenging due to the higher-order dependencies in space, time, and variables. Classic Bayesian statistical models \cite{cressie2015statistics} often require strong modeling assumptions. There is a growing literature on using deep learning due to its flexibility. Depending on the geometry of the space, the existing methods fall into two categories: regular grid-based such as Convolutional LSTM \citep{xingjian2015convolutional,lin2020preserving, wang2017predrnn,yao2018deep,yao2019revisiting} and graph-based such as  Graph Convolutional LSTM \citep{yu2018spatio, geng2019spatiotemporal,li2018diffusion}. The high-level idea behind these architectures is to integrate spatial embedding using (Graph) Convolutional neural networks with deep sequence models such as LSTM. However, most existing works in deep learning forecasting only produce point estimation rather than probabilistic forecasts with built-in uncertainty. 
% \ry{cite AC's recommendation}

% Stochastic videos predictions  models such as \citep{kosiorek2018sequential} are based on  the VAE framework but do not provide explicit uncertainty. Another unique challenge  that is different from video prediction is the non-Euclidean geometry. \cite{pan2019urban},  DeepAR \citep{salinas2020deepar} is deep probablistic forecasting model, which estimates the  uncertainty by parameterizing the likelihood function with NNs. However, their method relies on structural (factorizable) assumptions on the likelihood function. 

\subsection{Uncertainty Quantification}
Two types of deep UQ methods prevail in data science. Bayesian methods model the posterior beliefs of  network parameters given the data. For example, 
 {Bayesian neural networks learning (BNN) uses approximate Bayesian inference to improve inference efficiency.  \citet{wang2016natural} proposes natural parameter network using  exponential family distributions. \citet{shekhovtsov2018feed} proposes new approximations for categorical transformations. However, these BNNs are focused on classification problems rather than forecasting.}   \cite{vandal2018quantifying} use concrete dropout for climate downscaling. \citep{wang2019deep} proposes a Bayesian deep learning model  VI for weather forecasting. On the other hand, frequentist UQ methods emphasize the robustness against variations in the data. For instance, \cite{pearce2018high,gasthaus2019probabilistic} propose prediction intervals as the forecasting objective. \citep{alaa2020frequentist} proposed a UQ method with bootstrapping using the influence function. This paper extends previous works and studies the efficacy and efficiency of various methods for UQ in spatiotemporal forecasts. We defer the detailed discussion of these methods to Section \ref{sec:bayesian_uq}. To the best of our knowledge, there does not exist a systematic benchmark study of these UQ methods for deep spatiotemporal forecasting.
%  \ry{dempster-shafer deep learning}

\section{Deep Spatiotemporal Forecasting}
\subsection{Problem Definition}
\label{sec:deep_models}
Given multivariate time series $\mathcal{X} = (\V{X}_1, \cdots, \V{X}_t) $ of $t$ time step, where $\V{X}_t \in \mathbb{R}^{P\times D}$ indicating $D$ features from $P$ locations. We also have a spatial  correlation matrix ${A}\in \mathbb{R}^{P\times P}$ indicating the spatial proximity.  Deep spatiotemporal forecasting approximates a function $f$ with deep neural networks such that:
\begin{equation}
    f: (\mathcal{X}; {A}) \rightarrow (\V{X}_{t+1}, \cdots, \V{X}_{t+H}; {A})
\label{eqn:spatiotemporal_forecast}
\end{equation}
where $H$ is the forecasting horizon. 

Spatiotemporal forecasting exploits  spatial correlations to improve the prediction performance.
% When the the space is a regular grid, popular deep spatiotemporal forecasting models   include Convolutional LSTM  (ConvLSTM)~\citep{xingjian2015convolutional} and PredRNN~\citep{wang2017predrnn}. The convolution operator extract  spatial features and RNN/LSTM models temporal dynamics. When the locations are distributed as a graph, a natural extension is to generalize regular convolution to graph convolution.  This type of models include Spatiotemporal Graph CNN~\citep{yu2018spatio}, Diffusion Convolutional RNN \edits{(DCRNN)} ~\citep{li2018diffusion} and etc.    \mathrm{Regular \ Grids}:(W \ast \V{x})_i=  \sum_{k}W_{k}\V{x}_{i-k} , \quad 
%
A popular idea behind existing deep learning models is 
to integrate a spatial embedding of  $\V{X}_t$ into a deep sequence model such as an RNN or LSTM. Mathematically speaking, that means to replace the matrix multiplication operator in an RNN with a convolution or graph convolution operator. 

For grid-based data we can use convolution, which is the key idea behind ConvLSTM \cite{xingjian2015convolutional}. As the locations are distributed as a grid, we reshape $\V{X}_t$ into a tensor of shape $W\times H \times D$ with $W\times H =P$ representing the width and the height of the grid. A convolutional RNN model becomes:
\begin{equation}
    \V{h}_{t+1}= \sigma(W^h\cdot \V{h}_{t} + W^x\cdot \V{X}_{t}) \longrightarrow   \V{h}_{t+1}= \sigma(W^h \ast \V{h}_{t} + W^x\ast \V{X}_{t})
\label{eqn:convlstm}
\end{equation}
where $\V{h}_t $ are the hidden states of the RNN and $\ast$ is the convolution $(W \ast \V{X})[m,n]=  \sum_i\sum_{j}W[i,j]\V{X}_t[m-i,n-j]$.

For graph-based data we use graph convolution, leading to graph convolutional RNN, similar to the design of DCRNN \cite{li2018diffusion}:
\begin{equation}
   \V{h}_{t+1}= \sigma(W \ast_g \V{h}_{t} + W\ast_g \V{X}_{t})
\label{eqn:graph_convlstm}
\end{equation}
Where $\ast_g$ stands for graph convolution. There are many variations for graph convolution or graph embedding. One example is  
\begin{equation}
W \ast_g \V{X}_t = W\cdot (D^{-1}A)\cdot \V{X}_t
\label{eqn:convolution}
\end{equation}
Here $D \in \mathbb{R}^{P\times P}$ contains the diagonal element of $A$.  One can also replace the random walk matrix $D^{-1}A$ with the normalized Laplacian matrix $I -D^{1/2}(D-A)D^{1/2}$ as in \cite{kipf2016semi}.

These deep learning models are developed only for point estimation. We will use them as base models to design and compare various deep UQ methods for probabilistic forecasts.
% To enable these deterministic models to generate probabilistic forecast,  we describe different UQ methods below.
\subsection{Evaluation Metrics}
\label{sec:MIS}
For point estimation, mean squared error (MSE) or mean absolute error (MAE) are well-established metrics. But for probabilistic forecasts, MSE is insufficient as it cannot characterize the quality of the prediction confidence. While held-out likelihood is the gold standard  in the  existing UQ literature, it is difficult to use for deep learning as they often do not generate explicit likelihood outputs \citep{kingma2018glow}. Instead, we revisit a key concept from statistics and econometric called Mean Interval Score \citep{gneiting2007strictly}. 

% \subsection{Mean Interval Score (MIS)}

Mean Interval Score (MIS)  is a scoring function for interval forecasts. \edits{It is also known as Winkler loss \citep{askanazi2018comparison}.} It rewards narrower confidence or credible intervals and encourages  intervals that  include the observations (coverage). From a computational perspective,
MIS is preferred  over other scoring functions such as the Brier score \citep{brier1950verification}, Continuous Ranked Probability Score (CRPS) \citep{matheson1976scoring, hamill1995probabilistic, gneiting2007strictly} as it is intuitive and  easy to compute. MIS only requires three quantiles while CRPS requires integration over all quantiles. 
From a statistical perspective, MIS does not constrain the model to be parametric, nor does it require explicit likelihood functions. It is therefore better suited for comparison across a wide range of methods.

We formally define MIS for estimated upper and lower confidence bounds.
For  a one-dimensional random variable $Z\sim\mathbb{P}_Z$, if the estimated upper and lower confidence bounds are $u$ and $l$, \edits{where $u$ and $l$ are the $(1-\frac{\rho}{2})$ and $\frac{\rho}{2}$ quantiles for the $(1-\rho)$ confidence interval}, MIS is defined using samples $z_i\sim\mathbb{P}_Z$:
\begin{align*}
    \MIS_N(u,l;\rho) = \frac{1}{N} \sum_{i=1}^N \Big\{ \lrp{u-l} &+ \frac{2}{\rho}(z_i-u)\ind{z_i>u} \\
    &+\frac{2}{\rho}(l-z_i)\ind{z_i<l}\Big\}.
\end{align*}
In the large sample limit, MIS converges to the following expectation:
\[
\textstyle \MIS_{\infty}(u,l;\rho) = (u-l) + \frac{2}{\rho} \lrp{ \E{Z-u|Z>u} + \E{l-Z|Z<l} }.
\]
We prove in the following two novel propositions about the consistency of MIS as a scoring function, which demonstrates that minimizing MIS will lead to unbiased estimates for the confidence intervals.
The first proposition is concerned with the consistency of MIS in the large sample limit and the second studies the finite sample consistency of it.
Proofs for both propositions are deferred to Appendix~\ref{Appendix:theory}.
% \allen{check with Yian, whether it is new, highlight or cite, why to show consistence}
% \allen{the interval is stable and does not reflect the deviation of the truth} 
\begin{proposition}
Assume that the distribution $\mathbb{P}_Z$ of $Z$ has a probability density function.
Then for $(u^*,l^*) = \argmin_{u>l; u, l \in \real} \MIS_{\infty} (u,l;\rho)$, $[l^*,u^*]$ is the $(1-\rho)$ confidence level.
\label{thm:mis}
\end{proposition}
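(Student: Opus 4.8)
The plan is to treat $\MIS_\infty(u,l;\rho)$ as a jointly convex, smooth function of the pair $(u,l)$ and to locate its minimizer by a first-order stationarity computation. First I would rewrite the two expectation terms as integrals against the density of $Z$. Writing $F$ for the cdf and $f$ for its density, we have $\E{(Z-u)\ind{Z>u}} = \int_u^\infty (z-u) f(z)\,dz$ and $\E{(l-Z)\ind{Z<l}} = \int_{-\infty}^l (l-z) f(z)\,dz$, so that
\[
\MIS_\infty(u,l;\rho) = (u-l) + \frac{2}{\rho}\int_u^\infty (z-u) f(z)\,dz + \frac{2}{\rho}\int_{-\infty}^l (l-z) f(z)\,dz.
\]
Before differentiating I would record that this objective is jointly convex in $(u,l)$: the maps $u \mapsto \E{(Z-u)_+}$ and $l\mapsto \E{(l-Z)_+}$ are convex (expectations of convex, piecewise-linear functions) and $(u-l)$ is linear, so the objective separates into a sum of convex terms and any stationary point is automatically a global minimizer. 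This convexity observation is what makes a stationary-point calculation sufficient.

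Next I would differentiate under the integral sign in each variable. The integrands are Lipschitz in the parameters $u$ and $l$ with constant bounded by $f$, which is integrable, so dominated convergence justifies the interchange; applying the Leibniz rule, the boundary contributions vanish because the integrand equals zero at $z=u$ and at $z=l$ respectively. This leaves $\partial_u \MIS_\infty = 1 - \frac{2}{\rho}\lrp{1-F(u)}$ and $\partial_l \MIS_\infty = -1 + \frac{2}{\rho}F(l)$. Setting both to zero yields $F(u^*) = 1 - \frac{\rho}{2}$ and $F(l^*) = \frac{\rho}{2}$, identifying $u^*$ and $l^*$ as the $(1-\rho/2)$- and $(\rho/2)$-quantiles of $\mathbb{P}_Z$. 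Because the density exists, $F$ is continuous, so such $u^*$ and $l^*$ exist, and $u^*>l^*$ holds whenever $\rho<1$, so the feasibility constraint $u>l$ is inactive.

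Finally I would conclude on coverage: since $\Prob(l^* \le Z \le u^*) = F(u^*) - F(l^*) = (1-\tfrac{\rho}{2}) - \tfrac{\rho}{2} = 1 - \rho$, the interval $[l^*,u^*]$ is exactly the $(1-\rho)$ confidence interval, as claimed. The main technical obstacle is the differentiation-under-the-integral step together with the non-uniqueness edge case: if $f$ vanishes on an interval the quantiles need not be unique, so rather than asserting a single minimizer I would argue that \emph{every} minimizer must satisfy the two level-set conditions $F(u^*)=1-\rho/2$ and $F(l^*)=\rho/2$, and hence every minimizer still attains coverage exactly $1-\rho$; the joint convexity guarantees that these stationarity conditions are simultaneously necessary and sufficient for a global minimum.
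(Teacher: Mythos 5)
Your proof is correct, and it is actually more complete than the one in the paper. The paper's argument restricts attention, ``for simplicity of exposition,'' to a density symmetric about $0$ with the ansatz $l=-u$, which collapses the problem to a single-variable calculus exercise: differentiate $\MIS_\infty(u;\rho)=2u+\frac{4}{\rho}\int_u^\infty (z-u)p_Z(z)\,\rd z$, set the derivative to zero, and read off $\int_{u^*}^\infty p_Z = \rho/2$. It never verifies that the stationary point is a minimum, nor does it treat the general asymmetric case. You instead keep $u$ and $l$ as independent variables, derive the two separate first-order conditions $F(u^*)=1-\rho/2$ and $F(l^*)=\rho/2$, and -- crucially -- observe that the objective is a sum of convex terms ($(u-l)$ linear, $u\mapsto\E{(Z-u)_+}$ and $l\mapsto\E{(l-Z)_+}$ convex), so stationarity is both necessary and sufficient for global optimality. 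This buys you the general statement as written (no symmetry assumption), a justification for why the critical point is the minimizer, a check that the constraint $u>l$ is inactive for $\rho<1$, and a clean handling of the non-uniqueness edge case when the density vanishes on an interval. The Leibniz-rule computation is right (the boundary terms vanish because the integrands are zero at $z=u$ and $z=l$), and the final coverage identity $F(u^*)-F(l^*)=1-\rho$ is exactly the desired conclusion. In short: same underlying first-order-condition idea as the paper, but carried out in full generality with the optimality argument the paper omits.
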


\begin{proposition}
\label{thm:mis_finite}
For $(u^*,l^*) = \arg\min_{u>l; u, l \in \real} \MIS_{N} (u,l;\rho)$, $[l^*,u^*]$ is the $(1-\rho)$ quantile of the empirical distribution formed by the samples $\{z_1,\cdots,z_N\}$.
\end{proposition}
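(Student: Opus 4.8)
The plan is to exploit separability. Since $\frac{1}{N}\sum_{i=1}^N (u-l) = u-l$, the finite-sample objective splits additively as $\MIS_N(u,l;\rho) = g(u) + h(l)$, where
\[
g(u) := u + \frac{2}{\rho N}\sum_{i=1}^N (z_i-u)\ind{z_i>u}, \qquad h(l) := -l + \frac{2}{\rho N}\sum_{i=1}^N (l-z_i)\ind{z_i<l}.
\]
Because there is no coupling term between $u$ and $l$ once the $(u-l)$ piece is expanded, the joint minimizer is obtained by minimizing $g$ and $h$ independently. First I would record that each summand $(z_i-u)\ind{z_i>u} = \max\{z_i-u,0\}$ is convex and piecewise linear in $u$, so $g$ (and likewise $h$) is convex piecewise linear; consequently the first-order subdifferential condition $0\in\partial g(u^*)$ is both necessary and sufficient for global optimality.

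Next I would compute the relevant one-sided slopes. At any $u$ that is not one of the data points, $g$ is differentiable with
\[
g'(u) = 1 - \frac{2}{\rho N}\,\#\{i : z_i > u\},
\]
which is nondecreasing in $u$ (reconfirming convexity, since $\#\{i:z_i>u\}$ decreases as $u$ grows). The slope is negative precisely when $\#\{i:z_i>u\} > \rho N/2$ and positive when $\#\{i:z_i>u\} < \rho N/2$. Hence the minimum of $g$ is attained at the smallest threshold $u^*$ for which the count of samples strictly exceeding $u^*$ drops to at most $\rho N/2$; this is exactly the empirical $(1-\tfrac{\rho}{2})$-quantile of $\{z_1,\dots,z_N\}$. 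A symmetric computation gives $h'(l) = -1 + \frac{2}{\rho N}\#\{i : z_i < l\}$, so the minimizer $l^*$ is the empirical $\tfrac{\rho}{2}$-quantile. Since for $\rho\in(0,1)$ the $(1-\tfrac{\rho}{2})$-quantile is at least the $\tfrac{\rho}{2}$-quantile, the constraint $u>l$ is inactive and the unconstrained optimum is feasible; concatenating the two conclusions shows $[l^*,u^*]$ is the $(1-\rho)$ empirical quantile interval, leaving a $\tfrac{\rho}{2}$ fraction of samples on each side.

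The main obstacle I anticipate is the non-smooth, degenerate bookkeeping rather than the overall structure. When $\rho N/2$ is an integer, $g'$ vanishes on a whole interval between two consecutive order statistics, so the minimizer is not unique and the claimed quantile must be matched against whatever tie-breaking convention the empirical quantile definition uses; I would handle this by working with the subdifferential $\partial g(u^*) = [\,1-\tfrac{2}{\rho N}\#\{i:z_i\ge u^*\},\ 1-\tfrac{2}{\rho N}\#\{i:z_i> u^*\}\,]$ and checking $0$ lies in this interval, thereby characterizing the full set of minimizers and showing each corresponds to a valid empirical $(1-\tfrac{\rho}{2})$-quantile. The remaining steps (convexity, separability, slope signs) are routine once this edge case is stated carefully.
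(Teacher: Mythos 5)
Your proposal is correct and follows essentially the same route as the paper's proof: both arguments treat $u$ and $l$ separately, observe that $\MIS_N$ is piecewise linear in each, and locate the minimizer by finding where the slope (the paper's $\partial \MIS_N/\partial l = -1 + \tfrac{2}{\rho N}\#\{i: z_i < l\}$, identical to your $h'$) changes sign, which happens exactly at the empirical $\tfrac{\rho}{2}$ and $(1-\tfrac{\rho}{2})$ quantiles. Your explicit additive decomposition into $g(u)+h(l)$, the convexity/subdifferential framing, and the treatment of the degenerate case where $\rho N/2$ is an integer are slightly more careful than the paper's presentation, but they formalize the same underlying argument rather than offering a different one.
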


Note from Proposition~\ref{thm:mis_finite} that the optimal interval for $\MIS_{N}$ contains $\rho$ portion of the data points, because of the balance it strikes between the reward towards narrower intervals and the penalty towards excluding observations. Next, we describe a unified framework based on statistical decision theory to better understand uncertainty quantification.

\section{Uncertainty Quantification in Spatiotemporal Forecasts}
\label{sec:uq_framework}
We  cast uncertainty quantification  from frequentist and Bayesian perspectives into a single framework.
% Next, we introduce  deterministic deep learning  models for spatiotemporal forecasting. Finally,  
Then we describe both the Bayesian and frequentist techniques to enable  deep spatiotemporal forecasting models to generate probabilistic forecasts.

\subsection{A Unified Framework}
We start from a statistical decision theory point of view to examine uncertainty quantification.
Assume that each datum $\lX$ and its label $\lY$ are generated from a mechanism specified by a probability distribution with parameter $\theta$: $p(\lY|\lX;\theta)$.
We further assume that the parameter $\theta$ is distributed according to $p(\theta)$.
% Then the joint distribution of $x$ and $\theta$ is: $p(x,\theta) = p(x|\theta) p(\theta)$.
The goal of a probabilistic inference procedure is to minimize the statistical risk, which is an expectation of the loss function over the distribution of the data as well as $p(\theta)$, the distribution of the class of models considered.
Concretely, consider an estimator $\hat{\theta}$ of $\theta$, which is a measurable function of the dataset $\{\lX\} = \{\lX_0,\cdots,\lX_N\}$.
We wish to minimize its risk:
\begin{eqnarray} 
\textstyle {\rm Risk}(\hat{\theta},\theta) &=& \E{\|f(\hat{\theta}) - f(\theta)\|^2}\nonumber \\
&=& \Ep{\theta}{ \E{\|f(\hat{\theta}) - f(\theta)\|^2 \Big| \theta} } \label{eq:frequentist}  \\
&=&  \E{ \Ep{\theta}{\|f(\hat{\theta}) - f(\theta)\|^2 \Big| \{\lX\}} } \label{eq:Bayesian},
\end{eqnarray}
where in this paper, function $f$ is taken to be  the prediction of the deep learning model given its parameters.

When we design an inferential procedure, we can use~\eqref{eq:frequentist} or~\eqref{eq:Bayesian}.
If we take the procedure of~\eqref{eq:frequentist} and be a frequentist, we first seek the estimator $\hat{\theta}$ to minimize %$\E{\|f(\hat{\theta}) - f(\theta)\|^2 \big| \theta}$. 
$\mathbb{E}[\|f(\hat{\theta}) - f(\theta)\|^2 | \theta]$.
In practice, we minimize the empirical risk over the training dataset $\{\lX\}$, $\frac{1}{N}\sum_{i\in\{0,\cdots,N\}} (f(\lX_i;\hat{\theta}) - \lY_i)^2$, instead of the original expectation that is agnostic to the algorithms.

If we take the procedure of~\eqref{eq:Bayesian} and be Bayesian, we first take the expectation over the distribution of the parameter space $\theta$, conditioning on the training dataset $\{\lX\}$.
We can  find that minimizing $\Ep{\theta}{\| f(\hat{\theta})-f(\theta) \|^2 \Big| \{\lX\}}$ is equivalent to taking $f(\hat{\theta})$ to be $\Ep{\theta}{f(\theta)}$.
For a proper prior distribution over $\theta$, this boils down to the posterior mean of $f(\theta)$.
In this work, we also take function $f$ to be the output of the model.

To quantify uncertainty of the respective approaches, it is important to capture variations in $\theta$ that determine the distribution of the data.
There are two sources of in-distribution uncertainties corresponding to different modes of variations in $\theta$: effective dimension of the estimator $\hat{\theta}$ being smaller than that of the true $\theta$, oftentimes addressed to as the bias correction problem; and the variance of the estimator $\hat{\theta}$, often approached via parameter calibration.
On top of that, we are also faced with distribution shift that poses additional challenges.

Frequentist methods aim to directly capture variations in the data. For example, quantile regression explores the space of predictions within the model class by ensuring proper coverage of the data set. bootstrap method resamples the data set to target both the bias correction and parameter calibration problems, albeit possessing a potentially high resampling variance.
Bayesian methods often leverage prior knowledge about the distribution of $\theta$ and use MCMC sampling to integrate different models to capture both variations in the parameter space.

% For the frequentist methods, it is important to understand the sample complexity and how it scales with the model and data complexity.
% For the Bayesian methods, it is crucial to understand how well the priors capture the variation of $\theta$, and the computation complexity of MCMC algorithms associated with the posterior distribution.
In practice, it is important to understand the sample complexity (e.g., number of resampling runs in bootstrap or number of MCMC samples) of all methods and how it scales with the model and data complexity.
For deep learning models, it is also crucial to capture the trade-off between the variance incurred by the algorithms and their computation complexity.
To understand the performance of the uncertainty quantification methods, we apply the statistically consistent metric MIS to measure how well the confidence intervals computed from either frequentist or Bayesian methods capture the uncertainty of the predictions given the data.
In what follows, we introduce the computation methods for uncertainty quantification.
% We additionally explore a method of directly minimizing the empirical estimate of the MIS score to compute the confidence interval.
\subsection{Frequentist UQ Methods}
We start by describing the frequentist methods.
The general idea behind them is to directly capture variations in the dataset, either by resampling the data or by learning an interval bound to encompass the dataset.

\textbf{Bootstrap.}
The (generalized) bootstrap method~\citep{efron2016} randomly generates in each round a weight vector over the index set of the data.
The data are then resampled according to the weight vector.
With every resampled dataset, we retrain our model and generate predictions.
Using the predictions from different retrained models, we infer the $(1-\rho)$ confidence interval from the $(1-\frac{\rho}{2})$ and $\frac{\rho}{2}$ quantiles based on Proposition~\ref{thm:mis_finite}. The quantiles can be estimated using the order statistics of Monte Carlo samples.  

\textbf{MIS Regression}
For a fixed confidence level $\rho$, we can directly minimize MIS  to obtain estimates of the confidence intervals.
\edits{
Specifically, we use MIS as a loss function for deep neural networks, we use a multi-headed model to jointly output the upper bound $u(x)$, lower bound $l(x)$, and the  prediction $f(x)$ for a given input $x$, and minimize the neural network parameter $\theta$:
\begin{align}
\lefteqn{L_{\text{MIS}}(y,u(x),l(x),f(x);\theta, \rho)} \nonumber\\
&= \min\limits_{\theta}\Big\{ \textstyle  \mathbb{E}_{(x,y)\sim \mathcal{D}}[ (u(x)-l(x))+ \frac{2}{\rho}(y-u(x))\ind{y>u(x)} \nonumber\\ 
&\;\quad\qquad \textstyle + \frac{2}{\rho}(l(x)-y)\ind{y<l(x)} + |y-f(x)|]   \Big\} \label{eqn:MIS_loss}
\end{align}

Here $\ind{\cdot}$ is an indicator function, which can be implemented using the identity operator over the larger element in Pytorch.
}

%\min\limits_{\theta} \mathbb{E}_{(x,y)\sim \mathcal{D}}[\text{MIS}(y, u(x), l(x); \rho) + \text{MAE}(y, f(x))]    \label{eqn:MIS_loss}
\textbf{Quantile Regression}
We can use the one-sided quantile loss function ~\citep{Koenker2005} to generate predictions for a fixed confidence level  $\rho$. \edits{Given an input $x$, and the output $f(x)$ of a neural network, parameterized by $\theta$, quantile loss is defined as follows:
\begin{align}
\lefteqn{ L_{\text{Quantile}}(y, f(x); \theta, \rho) } \nonumber\\ 
&= \min\limits_{\theta}\Big \{ \mathbb{E}_{(x,y)\sim \mathcal{D}}[(y-f(x))(\rho-\ind{y<f(x)}]\Big \} 
 \label{eqn:pinball_loss}
\end{align}
}
Quantile regression  behaves similarly as the MIS regression method. Both methods generate one confidence interval per time. \edits{Similar ideas of directly optimizing the prediction interval using different variations of quantile loss have also been explored  by \citet{kivaranovic2020adaptive,  tagasovska2019single,pearce2018high}.}

One caveat of these methods is that different predicted quantiles can cross each other due to variations given finite data.
This will lead to a strange phenomenon when the size of the data set and the model capacity is limited: the  higher confidence interval does not contain the interval of lower confidence level or even the point estimate.
One remedy for this issue is to add variations in both the data and the parameters to increase the effective data size and model capacity.
In particular, during training, we can use different subsets of data and repeat the random initialization from a prior distribution to form an ensemble of models.
In this way, our modified MIS and quantile regression methods have integrated across different models to quantify the prediction uncertainty and have taken advantage of the Bayesian philosophy.

% Another solution to alleviate quantile crossing  and unify different confidence levels is to introduce approximations to the CRPS functions~\citep{gasthaus2019probabilistic}.
% In the experiment, we also include the Spline Quantile regression (SQ) method for comparison.

Another solution to alleviate the quantile crossing issue is to minimize CRPS \edits{by assuming the quantile function to be a piecewise linear spline with
monotonicity}~\citep{gasthaus2019probabilistic}, a method we call Spline Quantile regression (SQ).
In the experiment, we also included this method for comparison.

\subsection{Bayesian UQ Methods}
\label{sec:bayesian_uq}
We describe the Bayesian approaches to UQ.
These methods impose a prior distribution over the class of models and average across them, either via MCMC sampling or variational approximations.

\textbf{Stochastic Gradient MCMC (SG-MCMC)}
To estimate expectations or quantiles according to the posterior distribution over the parameter space, we use SG-MCMC~\citep{welling2011bayesian,ma2015complete}.
We find in practice that the stochastic gradient thermostat method (SGNHT)~\citep{ding2014bayesian,shang2015} is particularly useful in controlling the stochastic gradient noise.
This is consistent with the observation in~\citep{heek2019} where stochastic gradient thermostat method is applied to an i.i.d. image classification task.

To generate samples of model parameters $\theta$ (with a slight abuse of notation) according to SGNHT, we first denote the loss function (or the negative log-likelihood) over a minibatch of data as $\widetilde{\mathcal{L}}(\theta)$.
We then introduce hyper-parameters including the diffusion coefficients $A$ and the learning rate $h$ and make use of auxiliary variables $p\in\real^{d}$ and $\xi\in\real$ in the algorithm.
We randomly initialize $\theta$, $p$, and $\xi$ and update according to the following update rule.
\begin{align}
\label{eq:SGNHT}
\left\{
\begin{array}{l}
    \theta_{k+1} = \theta_k + p_k h  \\
    p_{k+1} = p_k - \nabla \widetilde{\mathcal{L}}(\theta) h - \xi_k p_k h + \mathcal{N}(0,2A h) \\
    \xi_{k+1} = \xi_k + \lrp{ \frac{p_k^\rT p_k}{d} - 1 } h.
\end{array}
\right.
\end{align}
Upon convergence of the above algorithm at $K$-th step, $\theta_K$ follows the distribution of the posterior.
We run parallel chains to generate different samples according to the posterior and quantify the prediction uncertainty.
%We can also tailor SG-MCMC to long time series with strong local correlations using improved subsampling~\citep{ma2017HMM,aicher2019,aicher2019nonlinear}.

\textbf{Approximate Bayesian Inference}
SG-MCMC can be computationally expensive. There are also approximate Bayesian inference methods introduced to accelerate the inference procedures~\citep{maddox2019simple,dusenberry2020}.
In particular, the Monte Carlo (MC) drop-out method sets some of the  network weights to zero according to a prior distribution~\citep{gal2017concrete}.
This method serves as a simple alternative to  variational Bayes methods which approximate the posterior~\citep{blundell2015weight,graves2011practical,louizos2016structured,rezende2014stochastic,qiu2019quantifying}.
We use the popular MC dropout \cite{gal2017concrete} method in the experiments. 

% \ry{but why not VAE?} \yian{technically, we did use VI--the dropout method. I don't think one can exhaust all the variational methods here} ok, need a justification 
% popularity? it's hard to say what's the state of the art now 
% if we were to chase the most accurate mean estimate, currently it's our old work on rank-1 bnn, but that one takes a lot of parameter tuning to work, should we say that?
% ok, reference (wang 2019 uses vae...

\subsection{A Recipe for UQ in Practice}
The above-mentioned UQ methods have different properties. Table~\ref{tb:comparison-table} shows an overall comparison of different UQ methods for deep spatiotemporal forecasting. The relative computation budget is displayed in the ``Computation'' row. The numbers represent the numbers of parallel cores required to finish computation within the same amount of time.
% \begin{table}[h]
%   \centering
%   \resizebox{\linewidth}{!}{
% \begin{tabular}{cccc|cccc}
%     \toprule                
%     Method & Small sample & Asymptotic consistency  &  & Accuracy & Uncertainty  \\
%     \midrule
%  \texttt{Bootstrap} & & \checkmark &  & \checkmark & \\
%  \texttt{Quantile} & & &  & \checkmark & \checkmark\\
%  \texttt{MIS} & & &  & \checkmark & \checkmark\checkmark \\
%  \texttt{MC Dropout} & & &  & \checkmark & \\
%  \texttt{SG-MCMC} & \checkmark & \checkmark & & \checkmark\checkmark & \checkmark \\
%     \bottomrule
%   \end{tabular}
%   }
%     \caption{Comparison of different deep uncertainty quantification methods for forecasts. Double check marks represent robustly highest performance in experiments. \ry{wasting space}
%  }
%  \label{tb:comparison-table_version1}
% %   \vspace{-3mm}
% \end{table}
%

\begin{table}[h]
\vspace{-3mm}
  \centering
  \resizebox{\linewidth}{!}{
\begin{tabular}{c|c|c|c|c|c|ccc}
    \toprule                
    Method & Bootstrap & Quantile & SQ & MIS  &  MC Dropout &  SG-MCMC \\
    \midrule
 Computation & 25 & 1 & 1 & 1 & 1 & 25 \\
 Small sample & \xmark & \xmark & \xmark & \xmark & \xmark & \checkmark \\
 Consistency & \checkmark & \xmark & \xmark & \xmark & \xmark & \checkmark \\
 Accuracy & \checkmark & \checkmark & \checkmark & \checkmark &  \checkmark & \checkmark\checkmark \\
 Uncertainty & \xmark & \checkmark & \xmark & \checkmark\checkmark & \xmark & \checkmark \\
    \bottomrule
  \end{tabular}
  }
    \caption{Comparison of different deep uncertainty quantification methods for forecasting. Double check marks represent robustly highest performance in experiments. 
 }
 \label{tb:comparison-table}
\vspace{-7mm}
\end{table}

% \begin{table}[h]
% \vspace{-3mm}
%   \centering
%   \resizebox{\linewidth}{!}{
% \begin{tabular}{c|c|c|c|c|ccc}
%     \toprule                
%     Method & Bootstrap & Quantile & MIS  &  MC Dropout &  SG-MCMC \\
%     \midrule
%  Computation & 25 & 1 & 1 & 1 & 25 \\
%  Small sample & \xmark & \xmark & \xmark & \xmark & \checkmark \\
%  Consistency & \checkmark & \xmark & \xmark & \xmark & \checkmark \\
%  Accuracy & \checkmark & \checkmark & \checkmark &  \checkmark & \checkmark\checkmark \\
%  Uncertainty & \xmark & \checkmark & \checkmark\checkmark & \xmark & \checkmark \\
%     \bottomrule
%   \end{tabular}
%   }
%     \caption{Comparison of different deep uncertainty quantification methods for forecasts. Double check marks represent robustly highest performance in experiments. 
%  }
%  \label{tb:comparison-table}
% \vspace{-5mm}
% \end{table}
 
It is worth noting that when comparing computational budgets, we assume only one level of confidence (usually $95\%$) is estimated.
If $n$ confidence levels are required, then the computation complexity of quantile regression and MIS regression multiplies by $n$, whereas the computational budgets for the rest of the methods remain unchanged.
Through our experiments, we provide the following recipe for practitioners.

\begin{itemize}
\item \textbf{Large data, sufficient computation}: We recommend SG-MCMC and bootstrap as both methods generate accurate predictions, high-quality uncertainty quantification, which means the corresponding MIS is small, and have asymptotic consistency.
\item \textbf{Large data,  limited computation}: We recommend MIS  and quantile regression. Both of them prevail in providing accurate results with high-quality uncertainty quantification.  

% \textcolor{blue}{[Allen: textbf{large data,  limited computation}: We recommend MIS and Quantile regression. Both of them require no additional computation. Usually Quantile regression focuses on accuracy while MIS regression focuses on uncertainty quantification, MIS in our case.]}
%
\item \textbf{Small data}: Bayesian learning with SG-MCMC can have an advantage here. By choosing a proper prior, SG-MCMC can lead  to better generalization. Frequentist methods are often inferior with very limited samples, especially for mean prediction, see  experiments for more details. 
\item \textbf{Asymptotic consistency}: Both bootstrap and SG-MCMC methods are asymptotically consistent, making them the default choice when the computational budget and the dataset are sufficient.
When the computational budget is restrictive, the comparison is about sample complexity: the number of bootstrap resampling versus the number of MCMC chains determines which method needs more parallel computing resources.
We found in our experiments that the sample complexity of bootstrap is consistently higher than that of posterior sampling.
\end{itemize}

% \begin{figure}
%     \centering
%     \includegraphics[scale=0.6]{figs/Computation vs Requirement.pdf}
%     \caption{Placeholder. This figure requires more changes.}
%     \label{fig:recipe_of_methods}
% \end{figure}

\section{Experiments}
We benchmark the performance of both frequentist and Bayesian UQ methods on many spatiotemporal forecasting applications, including air quality PM2.5, road network traffic, and COVID-19 incident deaths. Air quality PM2.5  is on a regular grid, while traffic and COVID-19 are on  graphs.

\out{The experiments are implemented using pytorch \citep{paszke2019pytorch}.
Based on our observations, Bayesian methods typically reach lower errors in their mean predictions while frequentist methods, especially quantile  and MIS regression, prevail in estimating the confidence levels.
}

\subsection{Datasets}

\paragraph{Air Quality PM 2.5}
The air quality PM 2.5 dataset contains hourly weather and air quality data in Beijing provided by the KDD competition of 2018 \footnote{\url{https://www.biendata.xyz/competition/kdd_2018/}}. The weather data  is at the  grid level ($21\times31$), which contains temperature, pressure, humidity, wind direction, and wind speed,  forming a $21\times31\times5$ tensor at each timestamp. The air quality data is from 35 stations in Beijing. The air quality index we considered is PM2.5 measured in micrograms per cubic meter of air ($\mu g/m^{3}$). According to the air quality index provided by the U.S. Environmental Protection Agency \citep{us2012revised}, the air quality is unhealthy from 101 to 300, and hazardous from 301 to 500.  We follow the setting in \citep{liu2018air} to map PM2.5 to grid-based data using the weighted average over stations:

\begin{equation}
    PM2.5_{k} = \frac{1}{\sum_{i}{\frac{1}{d_{i,k}^{2}+\epsilon}PM2.5_{i}}}
    \label{eqn:interpolation}  
\end{equation}

Here $d_{1,cell}$ is the distance between station i and the grid cell k. $\epsilon$ is introduced to prevent divide-by-zero error.

\paragraph{METR-LA Road Network Traffic}
The {METR-LA} \citep{jagadish2014big} dataset contains $4$ months vehicle speed information from loop detector sensors in Los Angeles County highway system. The maximum speed limit on most California highways is 65 miles per hour. The task is to forecast the traffic speed for $207$ sensors  simultaneously. We  build the spatial correlation matrix $A$ using  road network distances between sensors $v_i$ and $v_j$ and  a Gaussian kernel:
\[
    A_{ij} = \exp\Big(- \frac{d(v_i,v_j)}{\sigma^2}\Big)
\]

\paragraph{COVID-19 Incident Death}
The COVID-19  dataset  contains reported deaths from Johns Hopkins University \citep{dong2020interactive} and the death predictions from a mechanistic, stochastic, and spatial metapopulation epidemic model called Global Epidemic and Mobility Model (GLEAM) \citep{balcan2009multiscale, balcan2010modeling,tizzoni2012real,zhang2017spread,chinazzi2020effect,davis2020estimating}. Both data are recorded for the $50$ US states during the period from May 24th to Sep 12th, 2020. We use the residual between the reported death and the corresponding GLEAM predictions to train the model (\url{http://covid19.gleamproject.org/}).
%, balcan2010modeling, tizzoni2012real, zhang2017spread
%To realize the full potential of our model with the small data, we fully used the data by training three groups of models separately for the corresponding tests starting at week 31, week 32, and week 33 in 2020. We use EpiWeeks %number (\url{https://savvytime.com/week-number/united-states/2020})   
%to meet the COVID-19 weekly forecasting requirement by Centers for Disease Control and Prevention (CDC). 
%GLEAM  is an individual-based, stochastic, and spatial epidemic model \citep{chinazzi2020effect,davis2020estimating}. 

{We construct the spatial correlation matrix $A$ using air traffic between different states. Each element in the matrix is the average number of passengers traveling between two states daily.
The air traffic data were obtained from the Official Aviation Guide (OAG) and the International Air Transportation Association (IATA) databases (updated in $2021$) \cite{OAG,IATA}. See Appendix~\ref{app:covid_deepgleam} for details.}
\out{This is highly challenging as the data is very small, highly noisy, and pertains to complex spatial dependency. We chose this dataset as its unique challenges make it an ideal task to test and compare different UQ methods for spatiotemporal forecasting.  }

\subsection{Experiment Setup}
For all experiments, we start with a deep learning  point estimate model (Point) based on Sec. \ref{sec:deep_models}. We use as deep learning models ConvLSTM \cite{xingjian2015convolutional} for grid-based data and DCRNN \cite{li2018diffusion} for graph-based data. Then we modify and adapt the  deep learning models to incorporate 6 different UQ methods as in Sec.\ref{sec:uq_framework}.
\out{including:
\begin{itemize}
    \item Bootstrap (F)
    \item Quantile Regression (F)
    \item Spline Quantile Regression (F)
    \item MIS Regression (F)
    \item Monte Carlo dropout (B)
    \item SG-MCMC (B)
\end{itemize}
where (F) indicates frequentist methods and (B) standards for Bayesian methods.   
}

For the ConvLSTM model \cite{xingjian2015convolutional}, we follow the best setting  reported in \citep{liu2018air}  with $3\times3$ filter, 256 hidden states, and 2 layers. We applied early stopping, autoregressive learning, and  gradient clipping \citep{zhang2019gradient} to improve the generalization. As PM2.5 is closely related to weather, we use  a combined loss
\begin{equation*}
    \mathcal{L} = MAE(weather) \times 0.2 + MAE(PM2.5)
    \label{eqn:loss_fun}  
\end{equation*}
%
% Here the optimal weight $w$ we found is 0.2.

For the DCRNN model, we use the exact setting  in \citep{li2018diffusion} for the traffic dataset and use calibrated network distance to construct the graph.  We applied early stopping,  curriculum learning, and  gradient clipping \citep{zhang2019gradient} to improve the generalization.  For the COVID-19 dataset, we chose to forecast the residual rather than directly predicting the death number because the latter has lower accuracy due to limited samples, see Sec. \ref{app:dcrnn_vs_deepgleam} for quantitative comparisons.

\out{Instead, we use the difference between daily death number and GLEAM predictions as input and output of the DCRNN model and name it DeepGLEAM model.  We subtract the learned difference from the GLEAM outputs to improve its prediction. Essentially, we use DCRNN (deep neural networks) to learn the correction terms for the mechanistic model GLEAM.}

We report the mean absolute error (MAE) as a point estimate metric. For UQ metric, we use MIS suggested in Sec \ref{sec:MIS} for 95\% confidence intervals. As MIS combines both the interval and coverage, we also report the interval score, which is the average difference between the upper and lower bounds.

\begin{table*}[t!]
\centering
\resizebox{0.76\textwidth}{!}{
% \scalebox{0.9} {
\scriptsize
\begin{tabular}{c|c|ccccccc}
\toprule
Horizon \textit{H} & Metric & Point & Bootstrap (F) & Quantile (F) & SQ (F) & MIS(F) & MC Dropout (B)& SG-MCMC  (B) \\ 
% ====================== LA ======================
% 3 Steps
\midrule
\multicolumn{9}{c}{\textbf{Air Quality PM 2.5}}\\
\midrule
\multirow{3}{*}{first 12 hrs} & \edits{MAE} & 22.93 & 22.27 & 24.89 & 23.35 & 24.41 & 22.11 & \textbf{21.83}  \\ 
& MIS & --- & 469.36 & 157.53 & 188.30 & \textbf{147.68} & 736.50 & 223.63    \\ 
& \edits{Interval width} & --- & 28.80 & 115.25 & 82.45 & 137.34 & 8.47 & 57.44 \\ 
\midrule
\multirow{3}{*}{first 24 hrs} & \edits{MAE} & 25.17 &  23.37 &  26.25  &  24.67 &  25.93 &  24.42 &   \edits{\textbf{23.10}} \\ 
& MIS & --- & 515.51 & 171.12 & 217.34 & \textbf{156.02} & 828.42 & 279.21    \\ 
& \edits{Interval width} & --- & 35.35 & 121.51 & 84.39 & 141.74 & 8.46 & 56.22 \\ 
\midrule
% \multirow{2}{*}{15 min} & RMSE & 4.91 & 5.35 & 5.00 & 5.20 & 5.02 & 4.93 &  \textbf{4.80}  \\ 
\multirow{3}{*}{total 48 hrs} & \edits{MAE} & 26.77 &  25.69 &  27.82  &  26.47 &  30.20 &  26.03 &   \edits{\textbf{24.73}} \\ 
& MIS & --- & 531.19 & 189.35  & 253.15 & \textbf{179.96} & 881.05 & 315.34    \\ 
& \edits{Interval width} & --- & 31.43 & 130.17 & 90.09 & 151.61 & 9.16 & 55.66  \\ 
\midrule
\multicolumn{9}{c}{\textbf{METR-LA Traffic}}\\
% ====================== LA ======================
% 3 Steps
\midrule
% \multirow{2}{*}{15 min} & RMSE & 4.91 & 5.35 & 5.00 & 5.20 & 5.02 & 4.93 &  \textbf{4.80}  \\ 
\multirow{3}{*}{15 min} & \edits{MAE} & \edits{2.38} &  \edits{2.63} &  \edits{2.43}  &  \edits{2.67} &  \edits{2.63} &  \edits{2.47} &   \edits{\textbf{2.32}} \\ 
& MIS & --- & 39.76 & 18.32  & 29.04 & \textbf{18.26} & 27.61 & 32.21    \\ 
& \edits{Interval width} & --- & \edits{5.80} & \edits{12.42} &\edits{8.38} & \edits{12.46} & \edits{8.99} & \edits{8.73}  \\ 
% & \edits{Coverage} & --- &  & 5.9 & 20.67 & 5.80 & 18.63 &   \\
% & \edits{MIS (90$\%$ CI)}& --- & \edits{25.28} & \edits{14.91} & \edits{19.05} & \edits{\textbf{14.87}} & \edits{19.14} & \edits{23.17} \\
% & \edits{Interval width} & --- & \edits{4.69} & \edits{9.85} & \edits{7.89} & \edits{9.89} & \edits{7.54} & \edits{6.09}  \\ 
% % & \edits{Coverage} & --- &  & 5.06 & 11.17 & 4.98 & 11.60 &   \\
% & \edits{MIS (80$\%$ CI)}& --- & \edits{16.38} & \edits{11.79} & \edits{13.53} & \edits{\textbf{11.68}} & \edits{13.56} & \edits{15.95} \\
% & \edits{Interval width} & --- & \edits{3.56} & \edits{6.89} & \edits{6.92} & \edits{7.09} & \edits{5.88} & \edits{4.33}  \\ 
% & \edits{Coverage} & --- &  & 4.90 & 6.61 & 4.59 & 7.68 &   \\
%
% \hline 
\midrule
% 6 Steps
% \midrule
% \multirow{2}{*}{30 min}& RMSE & 5.94 & 6.58& 6.00 & 6.36 & 5.95 & 6.08 &  \textbf{5.59}   \\ 
\multirow{3}{*}{30 min} & \edits{MAE} & \edits{2.73} &  \edits{3.19} &  \edits{2.79}  &  \edits{3.10} &  \edits{3.08} &  \edits{2.94} &   \edits{\textbf{2.54}} \\ 
& MIS & --- & 38.48  &  21.54  & 40.93 & \textbf{21.09} & 33.38 & 31.87  \\ 
& \edits{Interval width} & --- & \edits{7.86} & \edits{13.48} & \edits{8.37} & \edits{13.99} & \edits{11.10} & \edits{12.62}  \\ 
% & Coverage & --- &  & 8.07 & 32.57 & 7.11 & 22.29 &   \\ 
% & \edits{MIS (90$\%$ CI)}& --- & \edits{25.60} & \edits{17.62} & \edits{25.17} & \edits{\textbf{17.16}} & \edits{23.03} & \edits{24.18} \\
% & \edits{Interval width} & --- & \edits{6.36} & \edits{10.51} & \edits{7.87} & \edits{10.95} & \edits{9.32} &  \edits{9.08} \\ 
% % & Coverage & --- &  & 7.11 & 17.3 & 6.21 & 13.71 &   \\
% & \edits{MIS (80$\%$ CI)}& --- & \edits{17.32} & \edits{13.94} & \edits{16.78} & \edits{\textbf{13.59}} & \edits{16.25} & \edits{17.41} \\
% & \edits{Interval width} & --- & \edits{4.85} & \edits{7.23} & \edits{6.91} & \edits{7.56} & \edits{7.26} & \edits{6.47}  \\ 
% & Coverage & --- &  & 6.71 & 9.88 & 6.03 & 8.99 &   \\

%
% \hline
\midrule
% 12 Steps
% \multirow{2}{*}{1 hour}& RMSE & 7.07 & 7.98 & 7.05 & 7.91 & 6.98 & 7.76 & \textbf{6.41}    \\ 
\multirow{3}{*}{1 hour} & \edits{MAE} & \edits{3.14} &  \edits{3.99} &  \edits{3.19}  &  \edits{3.75} &  \edits{3.65} &  \edits{3.70} &   \edits{\textbf{3.00}} \\ 
& MIS & --- & 38.58 &  25.74 & 60.56 & \textbf{24.33} & 43.11 & 30.35   \\ 
& \edits{Interval width} & --- & \edits{11.65} & \edits{14.50} & \edits{8.38} & \edits{15.55} & \edits{14.46} & \edits{18.79}  \\ 
% & Coverage & --- &  & 11.25 & 52.19 & 8.79 & 28.65 &   \\ 
% & \edits{MIS (90$\%$ CI)}& --- & \edits{27.29} & \edits{21.18} & \edits{35.23} & \edits{\textbf{19.95}} &\edits{29.68} & \edits{24.59} \\
% & \edits{Interval width} & --- & \edits{9.45} & \edits{11.06} & \edits{7.88} & \edits{11.99} & \edits{12.14} & \edits{14.45}  \\ 
% % & Coverage & --- &  & 10.12 & 27.35 & 7.96 & 17.54 &   \\
% & \edits{MIS (80$\%$ CI)}& --- & \edits{19.58} & \edits{16.71} & \edits{22.06} & \edits{\textbf{16.08}} & \edits{20.88} & \edits{19.20} \\
% & \edits{Interval width} & --- & \edits{7.28} & \edits{7.45} & \edits{6.91} & \edits{7.88} & \edits{9.46} & \edits{10.42} \\ 
% & Coverage & --- &  & 9.27 & 15.15 & 8.20 & 11.42 &   \\
\midrule
\multicolumn{9}{c}{\textbf{COVID-19 Incident Death}}\\
% 3 Steps
\midrule
% \multirow{2}{*}{1W} & RMSE & 66.03 & 64.63 & 70.42 & 71.72 & 74.07 & 66.82 & \textbf{58.30} \\ 
\multirow{3}{*}{1W} & {MAE} & {34.32} & {32.57} & {36.63} & {34.42} & {39.03} & {34.27} & {\textbf{29.72}}  \\ 
& MIS  & --- & 856.87 & \textbf{413.37} & 1049.69 & 427.53 & 790.24 & 563.77\\ 
& {Interval width} & --- & {32.48} & {190.98} & {23.73} & {227.19} & {47.79} & {47.05}\\ 
% & \edits{Coverage} & --- & 856.21 & 222.39 & 1025.96 & 200.35 & 742.45 & 516.72 & ---\\ 
% & {MIS (90$\%$ CI)} & --- & {444.68} & {302.77} & {541.61} & {\textbf{285.99}} & {443.13} & {365.6}\\
% & {Interval width} & --- & {32.48} & {129.92} & {22.62} & {141.24} & {40.51} & {43.04}\\ 
% % & \edits{Coverage} & --- & 429.05 & 172.85 & 518.99 & 144.75 & 402.62 & 322.56 & ---\\ 
% & {MIS (80$\%$ CI)} & --- & {252.39} & {218.73} & {286.35} & {\textbf{200.32}} & {254.42} & {212.97}\\
% & {Interval width} & --- & {24.28} & {73.23} & {20.31} & {84.45} & {31.94} & {35.96}\\ 
% & \edits{Coverage} & --- & 219.88 & 145.49 & 266.04 & 115.87 & 222.48 & 177.01 & ---\\ 
% \hline 
\midrule
% 6 Steps
% \midrule
% \multirow{2}{*}{2W}& RMSE & 57.67 & 54.21 & 61.35 & 63.32 & 64.42 & 57.63 & \textbf{46.61}\\ 
\multirow{3}{*}{2W} & \edits{MAE} & \edits{33.72} & \edits{32.38} & \edits{36.81} & \edits{34.03} & \edits{36.35} & \edits{33.64} & \edits{\textbf{27.65}}  \\ 
& MIS  & --- & 762.55 & \textbf{363.14} & 1010.97 & 379.27 & 686.43 & 599.14\\ 
& \edits{Interval width} & --- & \edits{36.25} & \edits{219.06} & \edits{24.46} & \edits{260.24} & \edits{55.93} & \edits{45.45}\\ 
% & \edits{Coverage} & --- & 809.30 & 144.09 & 986.52 & 119.04 & 630.5 & 553.69 & ---\\ 
% & \edits{MIS (90$\%$ CI)} & --- & \edits{399.40} & \edits{276.32} & \edits{522.99} & \edits{\textbf{270.41}} & \edits{397.52} & \edits{332.00}\\
% & \edits{Interval width} & --- & \edits{36.25} & \edits{150.54} & \edits{23.35} & \edits{161.9} & \edits{47.53} & \edits{42.19}\\ 
% % & \edits{Coverage} & --- & 406.35 & 125.78 & 499.64 & 108.51 & 349.99 & 289.81 & ---\\ 
% & \edits{MIS (80$\%$ CI)} & --- & \edits{235.51} & \edits{196.56} & \edits{277.86} & \edits{\textbf{185.53}} & \edits{236.7} & \edits{197.59}\\
% & \edits{Interval width} & --- & \edits{27.69} & \edits{85.69} & \edits{21.00} & \edits{97.11} & \edits{37.58} & \edits{35.75}\\ 
% & \edits{Coverage} & --- & 207.98 & 110.87 & 256.86 & 88.42 & 199.11 & 161.84 & ---\\ 
% \hline
\midrule
% 12 Steps
% \multirow{2}{*}{3W}& RMSE & 70.12 & 67.70 & 72.92 & 72.52 & 73.65 & 70.03 & \textbf{59.27}\\ 
\multirow{3}{*}{3W}& \edits{MAE} & \edits{41.37} & \edits{40.33} & \edits{44.29} & \edits{41.24} & \edits{43.15} & \edits{41.26} & \edits{\textbf{34.62}}  \\ 
& MIS & --- & 1028.63  & 411.05 & 1292.98 & \textbf{402.46} & 905.66 & 821.71\\ 
& \edits{Interval width} & --- & \edits{39.95} & \edits{242.47} & \edits{24.16} & \edits{291.96} & \edits{62.39} & \edits{46.16}\\ 
% & \edits{Coverage} & --- & 1090.52 & 168.58 & 1268.82 & 110.51 & 843.27 & 775.55 & ---\\ 
% & \edits{MIS (90$\%$ CI)} & --- & \edits{534.29} & \edits{315.96} & \edits{664.50} & \edits{\textbf{304.12}} & \edits{515.15} & \edits{443.94}\\
% & \edits{Interval width} & --- & \edits{39.95} & \edits{170.12} & \edits{23.09} & \edits{184.03} & \edits{53.10} & \edits{43.07}\\ 
% % & \edits{Coverage} & --- & 546.00 & 145.84 & 641.41 & 120.09 & 462.05 & 400.87 & ---\\ 
% & \edits{MIS (80$\%$ CI)} & --- & \edits{307.14} & \edits{237.90} & \edits{349.00} & \edits{\textbf{220.60}} & \edits{300.09} & \edits{254.54}\\
% & \edits{Interval width} & --- & \edits{29.87} & \edits{96.07} & \edits{20.81} & \edits{111.79} & \edits{42.03} & \edits{36.63}\\ 
% & \edits{Coverage} & --- & 278.36 & 141.83 & 328.19 & 108.81 & 258.06 &  & ---\\ 
%
% \hline
\midrule
% 12 Steps
% \multirow{2}{*}{4W}& RMSE & 70.75 & 68.63 & 73.92 & \textbf{69.94} & 72.44 & 70.60 & 70.57\\ 
\multirow{3}{*}{4W} & \edits{MAE} & \edits{42.37} & \edits{41.71} & \edits{46.20} & \edits{41.79} & \edits{44.45} & \edits{42.28} & \edits{\textbf{40.66}}  \\ 
& MIS & --- &  1035.26& 455.27 & 1303.02 & \textbf{428.82} & 891.45 & 852.26\\ 
& \edits{Interval width} & --- & \edits{43.61} & \edits{262.09} & \edits{23.85} & \edits{316.13} & \edits{67.52} & \edits{47.58}\\ 
% & \edits{Coverage} & --- &  & 193.18 & 1279.17 & 112.7 & 823.94 &  & ---\\ 
% & \edits{MIS (90$\%$ CI)} & --- & \edits{539.43}  & \edits{359.69} & \edits{669.76} & \edits{\textbf{343.83}} & \edits{512.72} & \edits{458.94}\\
% & \edits{Interval width} & --- & \edits{43.61} & \edits{190.23} & \edits{22.79} & \edits{206.27} & \edits{57.50} & \edits{44.65}\\ 
% % & \edits{Coverage} & --- &  & 169.46 & 646.96 & 137.56 & 455.22 &  & ---\\ 
% & \edits{MIS (80$\%$ CI)} & --- & \edits{315.05} & \edits{262.76} & \edits{351.96} & \edits{\textbf{252.51}} & \edits{302.66} & \edits{261.32}\\
% & \edits{Interval width} & --- & \edits{32.32} & \edits{105.6} & \edits{20.58} & \edits{128.51} & \edits{45.56} & \edits{38.03}\\ 
% & \edits{Coverage} & --- &  & 157.16 & 331.38 & 124 & 257.09 &  & ---\\ 
%
\bottomrule
\end{tabular}}

\caption{Performance comparison of 6 different UQ methods applied to grid-based air quality PM2.5, graph-based traffic and COVID-19 incident death forecasting. (F) indicates frequentist methods and (B) standards for Bayesian methods. We compare these methods using the mean absolute error (MAE), the mean interval score (MIS), and the width of the confidence bounds.}
\label{tab:accuracy}
\vspace{-7mm}
\end{table*}

\subsection{Implementation Details}
We adapt ConvLSTM and DCRNN to implement various UQ methods with the following parameter settings. 

\textbf{Bootstrap}
We randomly dropped 50$\%$ of training data while keeping the original validation and testing data. We obtain 25 samples for constructing mean prediction and confidence intervals. 

\textbf{Quantile regression}
We apply the pinball loss function \citep{Koenker2005} with the corresponding quantile (0.025, 0.5, 0.975). The model and learning rate setup is the same as point estimation. 

\textbf{SQ regression}
We use the linear spline quantile function to approximate a quantile function and use the CRPS as the loss function. For every point prediction, there are 11 trained parameters to construct the quantile function. The 1\ts{st} parameter is the intercept term. The next 5 can be transformed to the slopes of 5 line segments. The last 5 can be transformed to a vector of the 5 knots' positions. The model and learning rate setup is the same as point estimation. 
% The result for comparison averages the performance of 10 trails. 

\textbf{MIS regression}
We combine MAE with MIS (\eqref{eqn:MIS_loss}) and directly minimize this loss function. The model and learning rate setup is the same as \cite{li2018diffusion}. 
% The result for comparison averages the performance of 10 trails. 

\textbf{MC Dropout}
We implement the algorithm provided by \citep{zhu2017deep} and simplify the model by only considering the model uncertainty. We apply random dropout through the testing process with 5\% drop rate and iterate 50 times to achieve a stable prediction. The result for comparison averages the performance of 10 trails. 

\textbf{SG-MCMC}
We set the learning rate of SG-MCMC $h=5e^{-4}$, as defined in Eqn. \ref{eq:SGNHT}, and we selected a Gaussian prior $\theta\sim\normal(0, 4.0)$ with random initialization as $\theta_0\sim\NORMAL(0, 0.2)$. We selected different priors and initialization parameters around the choices above. The performances are fairly stable for different choices as long as the model would not diverge (e.g., random initialization variance too large). We note here a symmetric Gamma prior could also work in this case with $\theta\sim\Gamma(0.1, 1)$. We apply early stopping at epoch 50, and it helps to improve the generalization on testing set. Our result is averaged from 25 posterior samples.

% We use a V100 GPU to perform the experiments. The computation times taken for (spline) quantile regression, MIS regression, and MC dropout are 15 s, 390 s, and 0.2 s, respectively. For bootstrap and SG-MCMC, we use 24 or more GPU cores to finish the experiments.
%
We use V100 GPUs to perform all the experiments. \edits{In addition to the MAE and MIS scores discussed in section~\ref{sec:MIS}, we also include the widths of the confidence bounds to demonstrate the coverage.
}

\out{
These interval widths demonstrate that bootstrap and SQ methods tend to be overconfident and provide narrower intervals, missing coverage over many data points, as compared with Quantile and MIS methods.
\edits{The interval in Table \ref{tab:covid_comparison} shows  bootstrap, SQ, MC dropout, and SG-MCMC methods tend to make overconfident predictions with smaller confidence interval bounds compared with Quantile, and MIS methods.}

Table \ref{tab:traffic} compares  UQ methods for 15 minutes, 30 minutes, and \ry{missing?}. The point estimate model is DCRNN which we use as a reference. We observe that Bayesian methods (SG-MCMC) lead to better prediction accuracy, outperforming the point estimate.
Meanwhile, frequentist methods generally outperform Bayesian methods in \edits{$95\%$} confidence intervals.

Directly minimizing MIS score function achieves the best performance in uncertainty. Its overall performance is similar to quantile regression.  MC dropout has a relatively good prediction accuracy with a reasonable MIS score, especially for the short-term forecast. The uncertainty estimation is robust within a reasonable range of dropout rate \citep{zhu2017deep}. We use 5\% dropout rate in this experiment. We note that bootstrap performs poorly for both metrics, most likely due to limited  samples.

SQ method is implemented to deal with the quantile crossing problem. 
% However, SQ method uses CRPS as the loss function, which could be unfair for comparison using MIS metric. Meanwhile, 
We use 5 knots to build the spline quantile functions, which might not be enough for prediction. Both prediction accuracy and MIS are not good for this method. 
%These two reasons possibly explain why SQ has bad prediction accuracy and MIS performance.

}

% As a typical example of time-series forecasting task, forecasting on traffic datasets has been properly studied by recent works using deep learning. In this subsection, we provided our experimental results applying 

% We evaluate probabilistic forecasting methods for both frequentist's and Bayesian methods on the traffic datasets. We showed a thorough comparison of these UQ methods with DCRNN on mean \textcolor{blue}{(some cases are median)} prediction, and also confidence interval. 

% We reported the experimental setup in Sec. \ref{Sec:traffic_expr_setup} and the evaluation of methods in Sec. \ref{Sec:traffic_expr_result}. 

\out{
\begin{table*}[!h]
\centering
\resizebox{\textwidth}{!}{
% \scalebox{0.9} {
\scriptsize
\begin{tabular}{c|c|ccccccc}
\toprule
$T$ & Metric  & Point & Bootstrap (F) & Quantile (F) & SQ (F) & \MIS(F) & MC Dropout (B)& SG-MCMC  (B) \\ 
% ====================== LA ======================
% 3 Steps
\midrule
% \multirow{2}{*}{15 min} & RMSE & 4.91 & 5.35 & 5.00 & 5.20 & 5.02 & 4.93 &  \textbf{4.80}  \\ 
\multirow{3}{*}{15 min} & \edits{MAE} & \edits{2.38} &  \edits{2.63} &  \edits{2.43}  &  \edits{2.67} &  \edits{2.63} &  \edits{2.47} &   \edits{\textbf{2.32}} \\ 
& MIS (95$\%$ CI)& --- & 39.76 & 18.32  & 29.04 & \textbf{18.26} & 27.61 & 32.21    \\ 
& \edits{Interval} & --- & \edits{5.80} & \edits{12.42} &\edits{8.38} & \edits{12.46} & \edits{8.99} & \edits{8.73}  \\ 
% & \edits{Coverage} & --- &  & 5.9 & 20.67 & 5.80 & 18.63 &   \\
% & \edits{MIS (90$\%$ CI)}& --- & \edits{25.28} & \edits{14.91} & \edits{19.05} & \edits{\textbf{14.87}} & \edits{19.14} & \edits{23.17} \\
% & \edits{Interval} & --- & \edits{4.69} & \edits{9.85} & \edits{7.89} & \edits{9.89} & \edits{7.54} & \edits{6.09}  \\ 
% % & \edits{Coverage} & --- &  & 5.06 & 11.17 & 4.98 & 11.60 &   \\
% & \edits{MIS (80$\%$ CI)}& --- & \edits{16.38} & \edits{11.79} & \edits{13.53} & \edits{\textbf{11.68}} & \edits{13.56} & \edits{15.95} \\
% & \edits{Interval} & --- & \edits{3.56} & \edits{6.89} & \edits{6.92} & \edits{7.09} & \edits{5.88} & \edits{4.33}  \\ 
% & \edits{Coverage} & --- &  & 4.90 & 6.61 & 4.59 & 7.68 &   \\
%
% \hline 
\midrule
% 6 Steps
% \midrule
% \multirow{2}{*}{30 min}& RMSE & 5.94 & 6.58& 6.00 & 6.36 & 5.95 & 6.08 &  \textbf{5.59}   \\ 
\multirow{3}{*}{30 min} & \edits{MAE} & \edits{2.73} &  \edits{3.19} &  \edits{2.79}  &  \edits{3.10} &  \edits{3.08} &  \edits{2.94} &   \edits{\textbf{2.54}} \\ 
& MIS (95$\%$ CI) & --- & 38.48  &  21.54  & 40.93 & \textbf{21.09} & 33.38 & 31.87  \\ 
& \edits{Interval} & --- & \edits{7.86} & \edits{13.48} & \edits{8.37} & \edits{13.99} & \edits{11.10} & \edits{12.62}  \\ 
% & Coverage & --- &  & 8.07 & 32.57 & 7.11 & 22.29 &   \\ 
% & \edits{MIS (90$\%$ CI)}& --- & \edits{25.60} & \edits{17.62} & \edits{25.17} & \edits{\textbf{17.16}} & \edits{23.03} & \edits{24.18} \\
% & \edits{Interval} & --- & \edits{6.36} & \edits{10.51} & \edits{7.87} & \edits{10.95} & \edits{9.32} &  \edits{9.08} \\ 
% % & Coverage & --- &  & 7.11 & 17.3 & 6.21 & 13.71 &   \\
% & \edits{MIS (80$\%$ CI)}& --- & \edits{17.32} & \edits{13.94} & \edits{16.78} & \edits{\textbf{13.59}} & \edits{16.25} & \edits{17.41} \\
% & \edits{Interval} & --- & \edits{4.85} & \edits{7.23} & \edits{6.91} & \edits{7.56} & \edits{7.26} & \edits{6.47}  \\ 
% & Coverage & --- &  & 6.71 & 9.88 & 6.03 & 8.99 &   \\

%
% \hline
\midrule
% 12 Steps
% \multirow{2}{*}{1 hour}& RMSE & 7.07 & 7.98 & 7.05 & 7.91 & 6.98 & 7.76 & \textbf{6.41}    \\ 
\multirow{3}{*}{1 hour} & \edits{MAE} & \edits{3.14} &  \edits{3.99} &  \edits{3.19}  &  \edits{3.75} &  \edits{3.65} &  \edits{3.70} &   \edits{\textbf{3.00}} \\ 
& MIS (95$\%$ CI)& --- & 38.58 &  25.74 & 60.56 & \textbf{24.33} & 43.11 & 30.35   \\ 
& \edits{Interval} & --- & \edits{11.65} & \edits{14.50} & \edits{8.38} & \edits{15.55} & \edits{14.46} & \edits{18.79}  \\ 
% & Coverage & --- &  & 11.25 & 52.19 & 8.79 & 28.65 &   \\ 
% & \edits{MIS (90$\%$ CI)}& --- & \edits{27.29} & \edits{21.18} & \edits{35.23} & \edits{\textbf{19.95}} &\edits{29.68} & \edits{24.59} \\
% & \edits{Interval} & --- & \edits{9.45} & \edits{11.06} & \edits{7.88} & \edits{11.99} & \edits{12.14} & \edits{14.45}  \\ 
% % & Coverage & --- &  & 10.12 & 27.35 & 7.96 & 17.54 &   \\
% & \edits{MIS (80$\%$ CI)}& --- & \edits{19.58} & \edits{16.71} & \edits{22.06} & \edits{\textbf{16.08}} & \edits{20.88} & \edits{19.20} \\
% & \edits{Interval} & --- & \edits{7.28} & \edits{7.45} & \edits{6.91} & \edits{7.88} & \edits{9.46} & \edits{10.42} \\ 
% & Coverage & --- &  & 9.27 & 15.15 & 8.20 & 11.42 &   \\
%
\bottomrule
\end{tabular}}
\caption{Performance comparison of 6 different UQ methods applied to graph-based  \emph{METR-LA} traffic  forecasting. }
\label{tab:traffic}
\vspace{-5mm}
\end{table*}

We use $70$-$10$-$20$ split of data for training, validation and  testing. Missing values are excluded. For all training, validation, and testing datasets, we are using an hour's traffic data to predict the next 5, 10, 15, ..., 60 minutes of traffic. 
}

% : we computed the pairwise road network distances for the first, and then build an adnacency matrix using a Gaussian kernel \cite{shuman2013emerging}.  We selected 207 sensors with data from Mar 1st 2012 to Jun 30th 2012. We separated traffic speed readings using 5 minutes windows for 4 months data, and applied Z-Score normalization.

\out{
We apply 6 UQ methods to DCRNN:  {Bootstrap (with $25$ resampled datasets)}, {quantile} regression, Spline quantile regression (SQ), and MIS regression are frequentist methods. Monte Carlo Dropout,  SG-MCMC (with $25$ posterior samples) are Bayesian methods. For MIS regression, we combine the Mean Absolute Error (MAE) score together with the Mean Interval Score (MIS) as the loss function. For SG-MCMC, we apply Stochastic Nos\'{e}-Hoover thermostat  \citep{ding2014bayesian} for sampling posterior.  See  Appendix \ref{Appendix:UQ_setup} for additional details.  
We report  Root Mean Square Error (RMSE) {and Mean Absolute Error (MAE)} for mean prediction  and Mean Interval Score (MIS) to assess the quality of the prediction uncertainty. 
}

\out{
\begin{table*}[h!]
\centering
\resizebox{\textwidth}{!}{
% \scalebox{0.9} {
\scriptsize
\begin{tabular}{c|c|ccccccc}
\toprule
$T$ & Metric  & Point & Bootstrap (F) & Quantile (F) & SQ (F) & \MIS(F) & MC Dropout (B)& SG-MCMC  (B) \\  
% ====================== LA ======================
% 3 Steps
\midrule
% \multirow{2}{*}{1W} & RMSE & 66.03 & 64.63 & 70.42 & 71.72 & 74.07 & 66.82 & \textbf{58.30} \\ 
\multirow{3}{*}{1W} & {MAE} & {34.32} & {32.57} & {36.63} & {34.42} & {39.03} & {34.27} & {\textbf{29.72}}  \\ 
& MIS (95$\%$ CI) & --- & 856.87 & \textbf{413.37} & 1049.69 & 427.53 & 790.24 & 563.77\\ 
& {Interval} & --- & {32.48} & {190.98} & {23.73} & {227.19} & {47.79} & {47.05}\\ 
% & \edits{Coverage} & --- & 856.21 & 222.39 & 1025.96 & 200.35 & 742.45 & 516.72 & ---\\ 
% & {MIS (90$\%$ CI)} & --- & {444.68} & {302.77} & {541.61} & {\textbf{285.99}} & {443.13} & {365.6}\\
% & {Interval} & --- & {32.48} & {129.92} & {22.62} & {141.24} & {40.51} & {43.04}\\ 
% % & \edits{Coverage} & --- & 429.05 & 172.85 & 518.99 & 144.75 & 402.62 & 322.56 & ---\\ 
% & {MIS (80$\%$ CI)} & --- & {252.39} & {218.73} & {286.35} & {\textbf{200.32}} & {254.42} & {212.97}\\
% & {Interval} & --- & {24.28} & {73.23} & {20.31} & {84.45} & {31.94} & {35.96}\\ 
% & \edits{Coverage} & --- & 219.88 & 145.49 & 266.04 & 115.87 & 222.48 & 177.01 & ---\\ 
% \hline 
\midrule
% 6 Steps
% \midrule
% \multirow{2}{*}{2W}& RMSE & 57.67 & 54.21 & 61.35 & 63.32 & 64.42 & 57.63 & \textbf{46.61}\\ 
\multirow{3}{*}{2W} & \edits{MAE} & \edits{33.72} & \edits{32.38} & \edits{36.81} & \edits{34.03} & \edits{36.35} & \edits{33.64} & \edits{\textbf{27.65}}  \\ 
& MIS (95$\%$ CI) & --- & 762.55 & \textbf{363.14} & 1010.97 & 379.27 & 686.43 & 599.14\\ 
& \edits{Interval} & --- & \edits{36.25} & \edits{219.06} & \edits{24.46} & \edits{260.24} & \edits{55.93} & \edits{45.45}\\ 
% & \edits{Coverage} & --- & 809.30 & 144.09 & 986.52 & 119.04 & 630.5 & 553.69 & ---\\ 
% & \edits{MIS (90$\%$ CI)} & --- & \edits{399.40} & \edits{276.32} & \edits{522.99} & \edits{\textbf{270.41}} & \edits{397.52} & \edits{332.00}\\
% & \edits{Interval} & --- & \edits{36.25} & \edits{150.54} & \edits{23.35} & \edits{161.9} & \edits{47.53} & \edits{42.19}\\ 
% % & \edits{Coverage} & --- & 406.35 & 125.78 & 499.64 & 108.51 & 349.99 & 289.81 & ---\\ 
% & \edits{MIS (80$\%$ CI)} & --- & \edits{235.51} & \edits{196.56} & \edits{277.86} & \edits{\textbf{185.53}} & \edits{236.7} & \edits{197.59}\\
% & \edits{Interval} & --- & \edits{27.69} & \edits{85.69} & \edits{21.00} & \edits{97.11} & \edits{37.58} & \edits{35.75}\\ 
% & \edits{Coverage} & --- & 207.98 & 110.87 & 256.86 & 88.42 & 199.11 & 161.84 & ---\\ 
% \hline
\midrule
% 12 Steps
% \multirow{2}{*}{3W}& RMSE & 70.12 & 67.70 & 72.92 & 72.52 & 73.65 & 70.03 & \textbf{59.27}\\ 
\multirow{3}{*}{3W}& \edits{MAE} & \edits{41.37} & \edits{40.33} & \edits{44.29} & \edits{41.24} & \edits{43.15} & \edits{41.26} & \edits{\textbf{34.62}}  \\ 
& MIS (95$\%$ CI)& --- & 1028.63  & 411.05 & 1292.98 & \textbf{402.46} & 905.66 & 821.71\\ 
& \edits{Interval} & --- & \edits{39.95} & \edits{242.47} & \edits{24.16} & \edits{291.96} & \edits{62.39} & \edits{46.16}\\ 
% & \edits{Coverage} & --- & 1090.52 & 168.58 & 1268.82 & 110.51 & 843.27 & 775.55 & ---\\ 
% & \edits{MIS (90$\%$ CI)} & --- & \edits{534.29} & \edits{315.96} & \edits{664.50} & \edits{\textbf{304.12}} & \edits{515.15} & \edits{443.94}\\
% & \edits{Interval} & --- & \edits{39.95} & \edits{170.12} & \edits{23.09} & \edits{184.03} & \edits{53.10} & \edits{43.07}\\ 
% % & \edits{Coverage} & --- & 546.00 & 145.84 & 641.41 & 120.09 & 462.05 & 400.87 & ---\\ 
% & \edits{MIS (80$\%$ CI)} & --- & \edits{307.14} & \edits{237.90} & \edits{349.00} & \edits{\textbf{220.60}} & \edits{300.09} & \edits{254.54}\\
% & \edits{Interval} & --- & \edits{29.87} & \edits{96.07} & \edits{20.81} & \edits{111.79} & \edits{42.03} & \edits{36.63}\\ 
% & \edits{Coverage} & --- & 278.36 & 141.83 & 328.19 & 108.81 & 258.06 &  & ---\\ 
%
% \hline
\midrule
% 12 Steps
% \multirow{2}{*}{4W}& RMSE & 70.75 & 68.63 & 73.92 & \textbf{69.94} & 72.44 & 70.60 & 70.57\\ 
\multirow{3}{*}{4W} & \edits{MAE} & \edits{42.37} & \edits{41.71} & \edits{46.20} & \edits{41.79} & \edits{44.45} & \edits{42.28} & \edits{\textbf{40.66}}  \\ 
& MIS (95$\%$ CI)& --- &  1035.26& 455.27 & 1303.02 & \textbf{428.82} & 891.45 & 852.26\\ 
& \edits{Interval} & --- & \edits{43.61} & \edits{262.09} & \edits{23.85} & \edits{316.13} & \edits{67.52} & \edits{47.58}\\ 
% & \edits{Coverage} & --- &  & 193.18 & 1279.17 & 112.7 & 823.94 &  & ---\\ 
% & \edits{MIS (90$\%$ CI)} & --- & \edits{539.43}  & \edits{359.69} & \edits{669.76} & \edits{\textbf{343.83}} & \edits{512.72} & \edits{458.94}\\
% & \edits{Interval} & --- & \edits{43.61} & \edits{190.23} & \edits{22.79} & \edits{206.27} & \edits{57.50} & \edits{44.65}\\ 
% % & \edits{Coverage} & --- &  & 169.46 & 646.96 & 137.56 & 455.22 &  & ---\\ 
% & \edits{MIS (80$\%$ CI)} & --- & \edits{315.05} & \edits{262.76} & \edits{351.96} & \edits{\textbf{252.51}} & \edits{302.66} & \edits{261.32}\\
% & \edits{Interval} & --- & \edits{32.32} & \edits{105.6} & \edits{20.58} & \edits{128.51} & \edits{45.56} & \edits{38.03}\\ 
% & \edits{Coverage} & --- &  & 157.16 & 331.38 & 124 & 257.09 &  & ---\\ 
%
\bottomrule
\end{tabular}}
\caption{Performance comparison of 6 different UQ methods for graph-based COVID-19 mortality forecasting.  }
% \vspace{-6mm}
\vspace{-5mm}
\label{tab:covid_comparison}
\end{table*}
}

% \paragraph{Training.}
% \ry{add details to model architecture, training setup, and different methods}

%use 6 weeks of data from Aug 16th 2020 to Sep 19th for test.

\begin{figure*}[t!]
    \centering
    \begin{minipage}{.485\textwidth}
        \centering
        \includegraphics[width=0.7\textwidth,trim={20 60 20 0}]{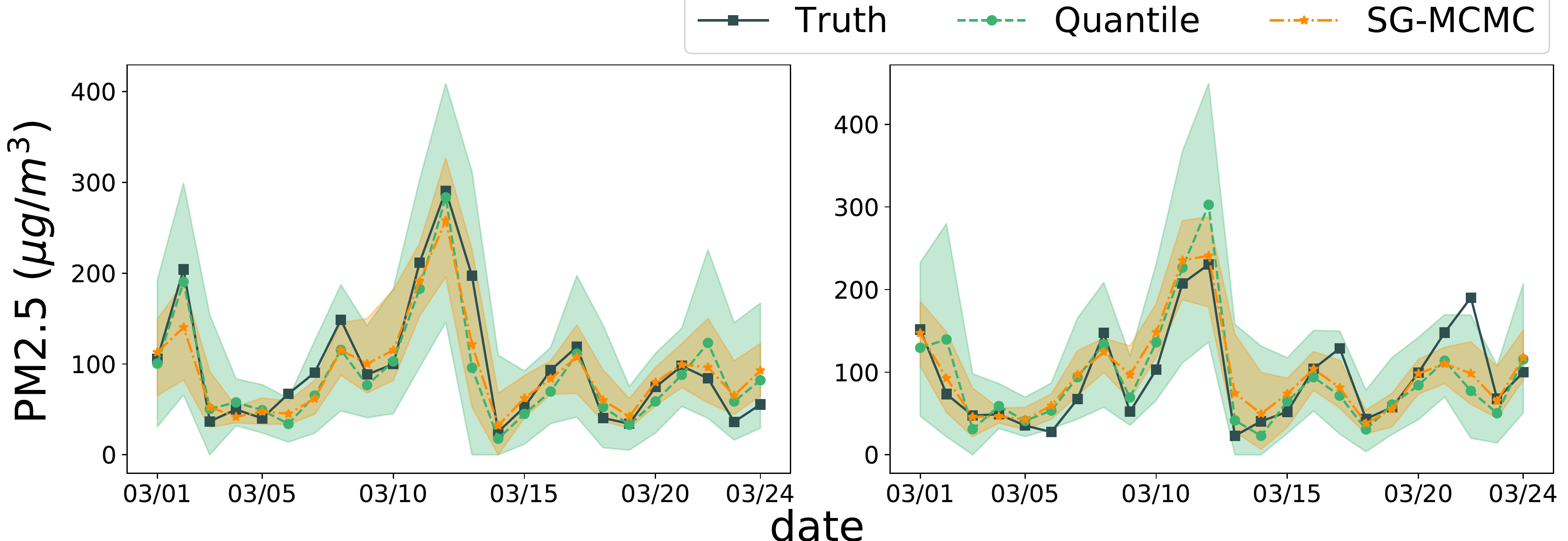}
        % \caption{12 hours ahead (Left) and 24 hours ahead (Right)  air quality PM2.5 forecasting over 24 day time span on a randomly selected grid cell.}
        \caption{{12 hours ahead (Left) and 24 hours ahead (Right) Air quality PM 2.5 forecasting over 24 day time span on a randomly selected grid cell.}}
        \label{fig:pm25}
    \end{minipage}%
    \hfill
    \begin{minipage}{0.485\textwidth}
        \centering
        \includegraphics[width=0.7\textwidth,trim={20 60 20 0}]{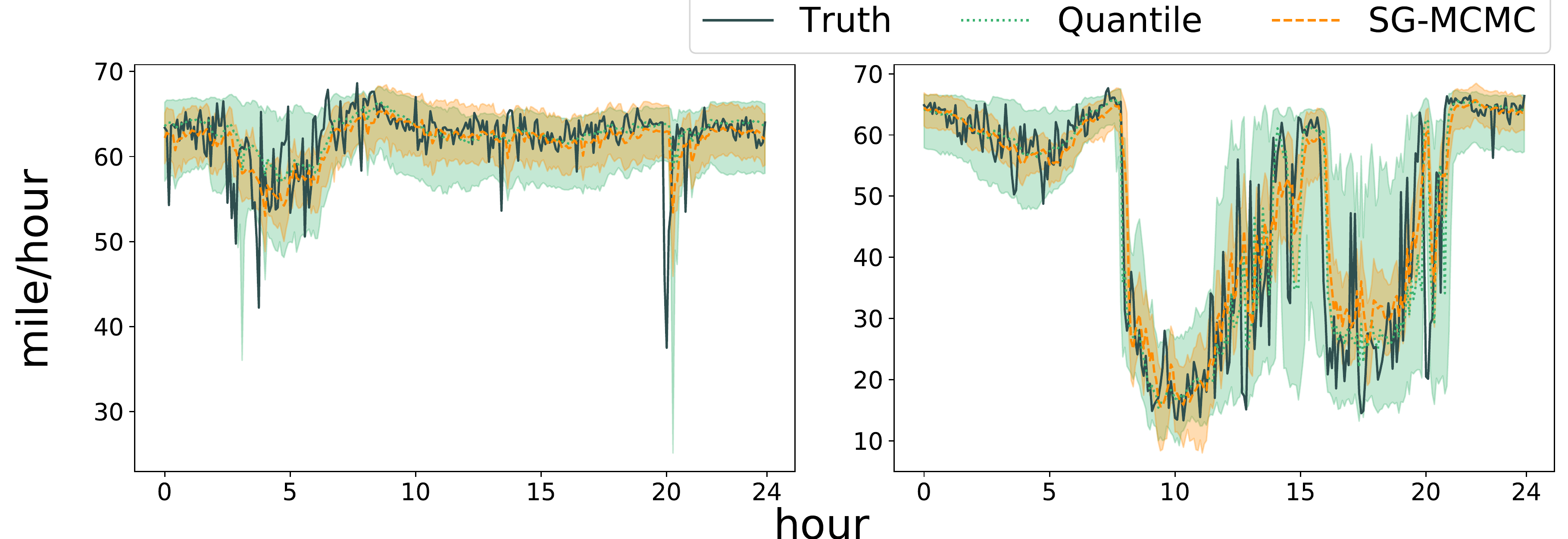}
        \caption{15 min ahead traffic forecasting over the 24 hour time span on two selected sensors. Left: regular hour traffic. Right: rush hour traffic.}
        \label{fig:traffic_qr_sgmcmc}
    \end{minipage}
    \begin{minipage}{1.0\linewidth}
        \centering
        \includegraphics[width=0.7\linewidth,trim={20 60 20 0}]{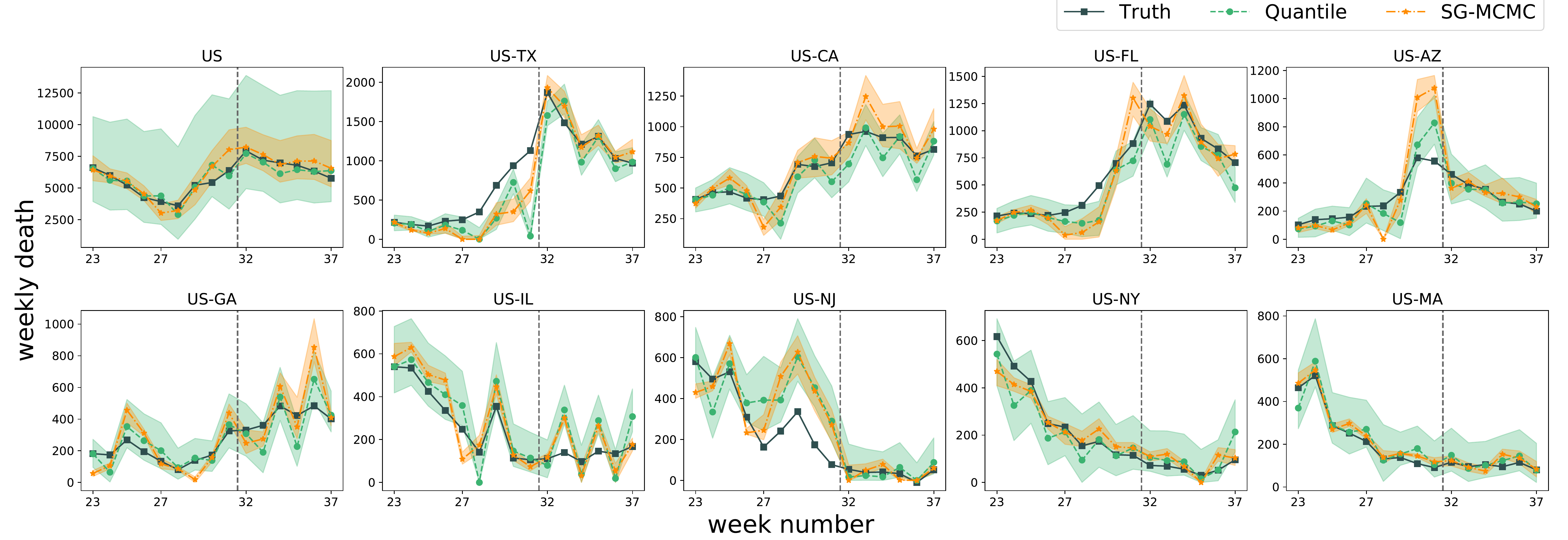}
        \caption{One week ahead COVID-19 incident death forecasting at the country level and 9 states in the United States with the heaviest death tolls from week 23 to week 37.}
        \label{fig:covid}
    \end{minipage}
    \vspace{-5mm}
\end{figure*}

% yet it tends to be overconfident and has worse MIS.
% helps avoid confidence interval crossing. However, potentially due to the small number of knots for SQ function construction mentioned above, its underperformances in both RMSE and MIS. For MC dropout, its RMSE is relatively robust, yet it tends to be overconfident and has worse MIS.

% Meanwhile, the estimation of CRPS could be biased with limited amount of samples. This will lead to a mean shift resulting in poor performance. 

\vspace{-3mm}
\subsection{Forecasting Performances}
Table \ref{tab:accuracy} compares the forecasting performances for 6 UQ methods across three different applications: (1)  PM 2.5 forecasting for next 12 - 48 hours on the hourly recorded weather and air quality dataset,  (2) 15min - 1 hour ahead traffic speed forecasting on {METR-LA}   and  (3) four weeks ahead mortality forecasting on COVID-19. Performances for air quality are aggregated for the entire sequence following the KDD competition standard. Others are reported with the corresponding timestamp.

Across the applications, we observe Bayesian method (SG-MCMC) leads to better prediction accuracy in MAE, even outperforming the point estimation models.
On the other hand, frequentist methods generally outperform Bayesian methods for the inference of \edits{ $95\%$} confidence intervals in MIS. 

In specific, directly minimizing MIS score function achieves the best performance in MIS, which is expected.  Quantile regression has a similar performance to the $\MIS$-regression, but under-performs in both MAE and MIS.  SQ enjoys good prediction accuracy, but has a larger MIS score compared to $\MIS$-regression. MC dropout has relatively good MAE, but the MIS scores are often the highest among all UQ methods. 

%These two reasons possibly explain why SQ has bad prediction accuracy and MIS performance.

% Table \ref{tab:covid_comparison} compares 6 different UQ methods for  We apply DCRNN for point estimation. Similar previous observations, we observe that SG-MCMC outperforms other models in point estimation with limited amount of samples. Quantile and MIS regressions achieve the best performance in MIS.
% Quantile regression obtains the best performance in MIS for one and two weeks ahead predictions. For longer predictions, MIS regression achieves the best performance. 
We note that bootstrap still under-performs most methods due to the limited amount of samples we have. SQ regression helps avoid the confidence interval crossing issue. However, it suffers from a large MIS due to the small number of knots for linear spline quantile function construction. For MC dropout, its prediction accuracy is relatively robust but its MIS is unsatisfactory.

\subsection{Forecasts Visualization}
We visualize the forecasts of Quantile regression (F) and SG-MCMC (B) for three different tasks in Figure~\ref{fig:pm25}, ~\ref{fig:traffic_qr_sgmcmc}, ~\ref{fig:covid}. Black solid lines indicate the ground truth. Green lines are the quantile regression predictions and orange lines are the forecasts from SG-MCMC. The shades represent the estimated $95\%$ confidence intervals. In general, we can see that Bayesian method SG-MCMC generates mean predictions much closer to the ground truth but often comes with narrower credible intervals. For example, in \textit{PM 2.5} prediction, the confidence intervals of SG-MCMC fail to cover the ground truth for both 24 and 48 hours prediction on 03/06 to 03/08. 

On the other hand, frequentist method quantile regression typically captures the true values with wider confidence interval bounds. This corresponds to its low MIS score reported in Table \ref{tab:accuracy}. Among all the tasks, the confidence interval bounds of quantile regression generally capture ground truth with $95\%$ probability. For COVID-19 forecasting, quantile regression suffers from limited amount of samples, occasionally resulting in poor performances in states including Texas (US-TX) and New Jersey (US-NJ). 

\section{Ablation study}

\subsection{Sample Complexity: Bootstrap and SG-MCMC}
Both bootstrap and SG-MCMC provide asymptotic consistency. Yet the sample complexities are often large for complex datasets. It is clear from Figure \ref{fig:sample_complexity} that the MIS score decreases as more Monte Carlo samples are drawn for both the bootstrap method and SG-MCMC posterior sampling method, indicating that sample variance is a major contribution to the inaccuracies. 
More importantly, we observe that the MIS score decreases faster with SG-MCMC than with bootstrap, signaling a smaller sample complexity for SG-MCMC.
\begin{figure}[ht]
    \centering
    \includegraphics[scale=0.35,trim={20 40 20 20}]{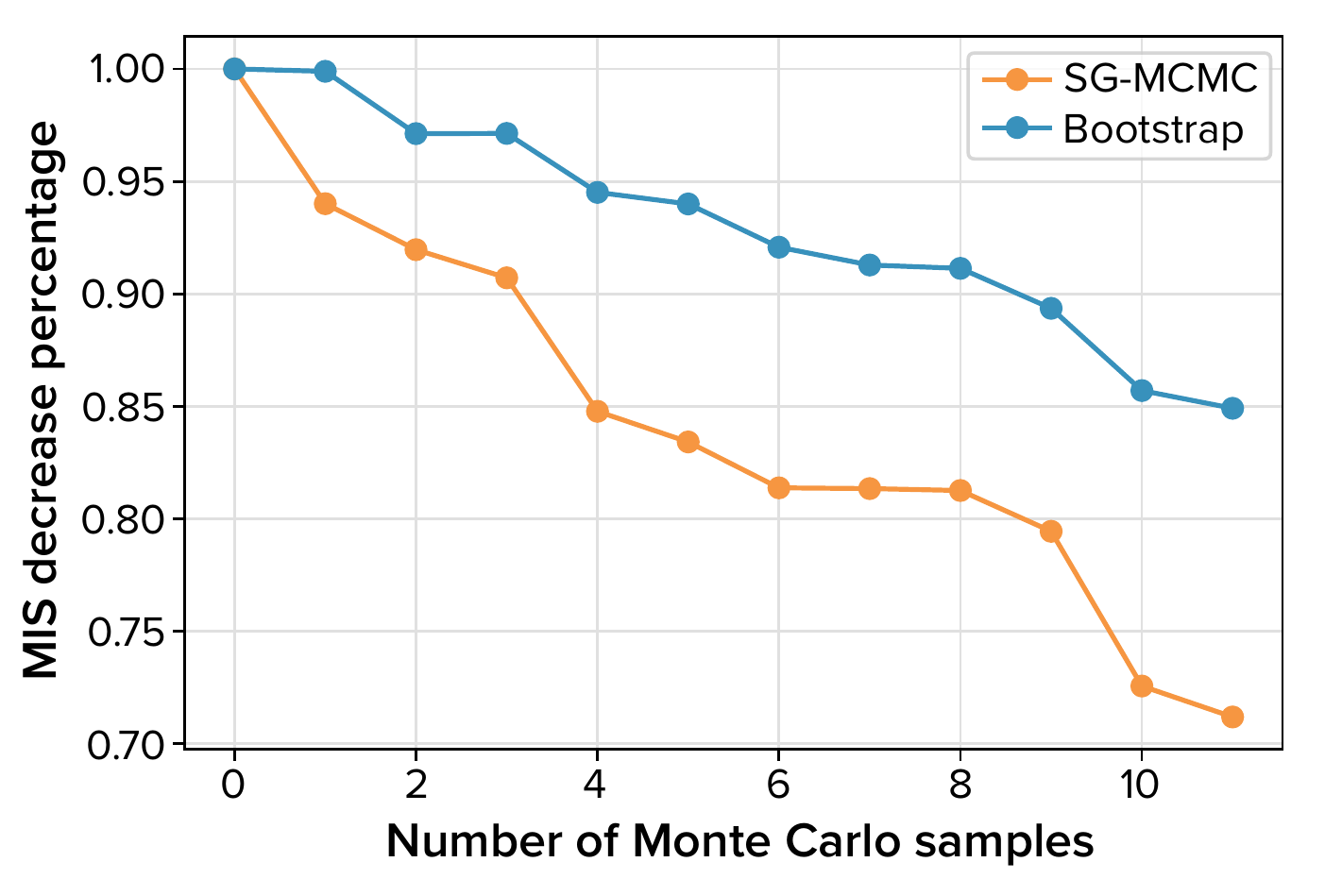}
    \caption{MIS of Bootstrap and SG-MCMC with increasing number of samples for COVID forecasting.}
    \label{fig:sample_complexity} 
    \vspace{-5mm}
\end{figure}
\subsection{COVID-19 models: mechanistic, deep learning and hybrid}
\label{app:dcrnn_vs_deepgleam}

% \begin{table}[t!]
% \centering
% \resizebox{\columnwidth}{!}{
% \begin{tabular}{c|cccc}
% \toprule
% $H$ & \edits{DeepGLEAM (DCRNN)} & \edits{DeepGLEAM (SG-MCMC)} & \edits{Deep} & \edits{GLEAM}\\ \hline
% % ====================== LA ======================
% % 3 Steps
% {1W} & \edits{66.03} & \edits{58.30} & \edits{239.94} & \edits{73.59}\\
% \cline{1-5}
% {2W}& \edits{57.67} & \edits{46.61} & \edits{213.31} & \edits{65.46}\\
% \cline{1-5}
% {3W}& \edits{70.12} & \edits{59.27} & \edits{189.49} & \edits{73.75}\\
% \cline{1-5}
% {4W}& \edits{70.75} & \edits{70.57} & \edits{161.88} & \edits{70.16}\\
% \bottomrule
% \end{tabular}}
% \caption{RMSE comparison of different approaches for COVID-19 mortality forecasting}

% \label{tab:deep}
% \vspace{-5mm}
% \end{table}

For COVID-19 mortality forecasting, we have limited amount of data, which is insufficient to train a deep model. Instead, we train our model on the residual between the reported death and the corresponding GLEAM predictions, leading to a hybrid method called DeepGLEAM. Essentially, DeepGLEAM is learning the correction term in the mechanistic GLEAM model. An alternative approach is to directly train our model on the raw incident death counts, which we refer as Deep. 

Table \ref{tab:deep} shows the RMSE comparison among these models for four weeks ahead forecasting. We can see that DeepGLEAM  outperform the mechanics GLEAM model while the Deep model is much worse w.r.t point estimates. In fact, there is a $6.6\%$ improvement for DeepGLEAM and $17\%$ improvement for DeepGLEAM on average from GLEAM. 
\begin{table}[h]
\centering
\resizebox{0.7\columnwidth}{!}{
\begin{tabular}{c|ccc}
\toprule
Horizon $H$ & \edits{DeepGLEAM }  & \edits{Deep} & \edits{GLEAM}\\ \hline
% ====================== LA ======================
% 3 Steps
{1W} & \edits{66.03} & \edits{239.94} & \edits{73.59}\\
\midrule
{2W}& \edits{57.67}& \edits{213.31} & \edits{65.46}\\
\midrule
{3W}& \edits{70.12}  & \edits{189.49} & \edits{73.75}\\
\midrule
{4W}& \edits{70.75} & \edits{161.88} & \edits{70.16}\\
\bottomrule
\end{tabular}}
\caption{RMSE comparison of different approaches for COVID-19 mortality forecasting}
\label{tab:deep}
\vspace{-8mm}
\end{table}

In Figure \ref{fig:dcrnn_vs_deepgleam}, we visualize the predictions from these three different methods: \edits{Deep}, GLEAM, and DeepGLEAM models in California. The input data (for training or validation) are from weeks before $34$, and we start the prediction from week $34$, labeled by the vertical dash line. It can be observed that the \edits{Deep model} fails to predict the dynamics of COVID-19 evolution. 
\begin{figure}[ht]
    \centering
    \includegraphics[scale=0.2,trim={20 60 20 30}]{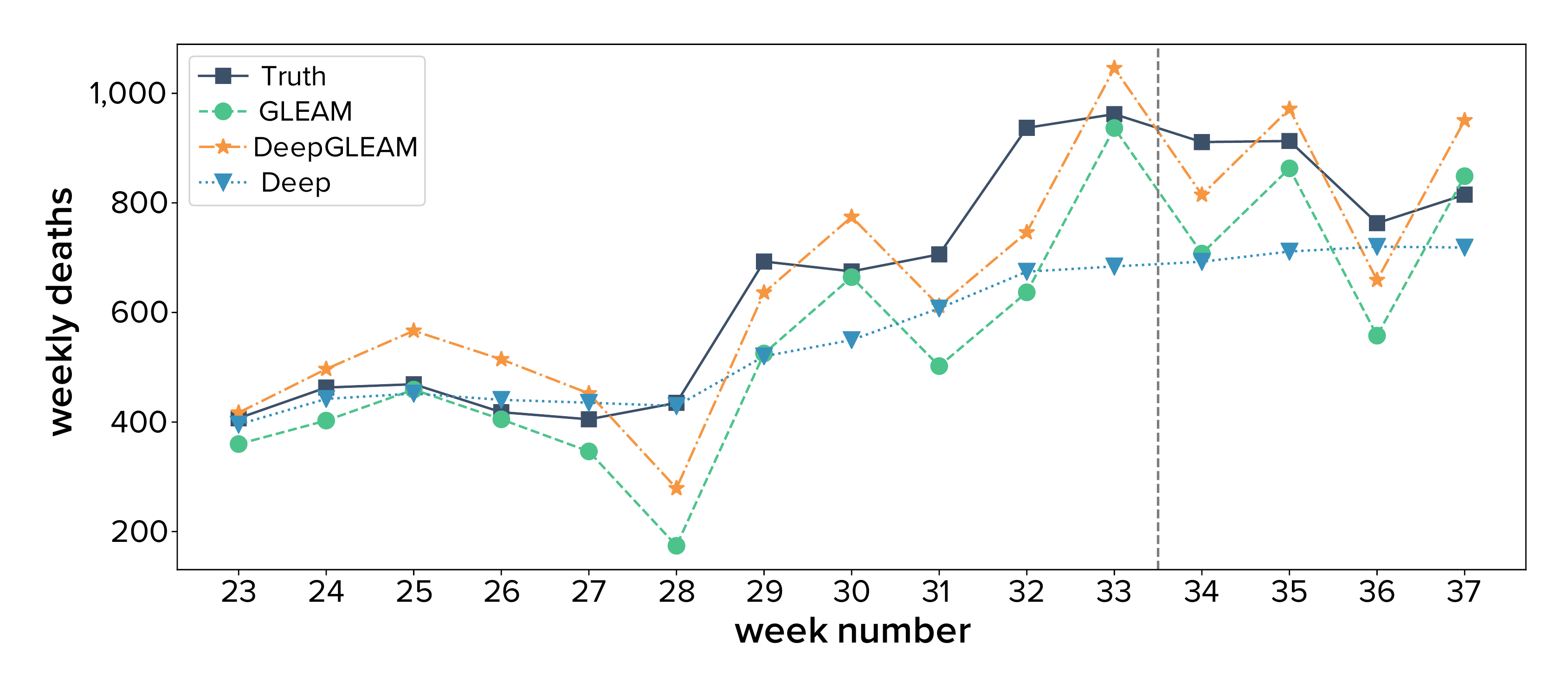}
    \caption{One week ahead COVID-19 forecasts visualization in California. Comparison shown for GEAM, DeepGLEAM and Deep models.}
    \label{fig:dcrnn_vs_deepgleam}
\vspace{-4mm}
\end{figure}
One reason for this phenomenon is the distribution shifts in the epidemiology dynamics.
With limited data, deep neural networks do not have enough information to guide their predictions.
On the other hand, GLEAM and DeepGLEAM leverage mechanistic knowledge about disease transmission dynamics  as inductive bias to infer the underlying dynamics change. 

\subsection{Anatomy of the performance: Coverage versus accuracy}
% Our definition of the MIS score in \eqref{eqn:MIS_loss} quantifies the coverage of hold-out data points by the confidence bounds. 
We conclude from Table~\ref{tab:accuracy} that in all the experiments we perform, better performance in MIS is related to not only higher accuracy but also larger confidence bounds.
This phenomenon indicates that the deep learning models we consider in this paper generally tend to be overconfident and benefit greatly from better capturing of all sources of variations in the data.

\section{Conclusion}
We conduct benchmark studies on  uncertainty quantification in deep spatiotemporal forecasting from both Bayesian and frequentist perspectives. Through experiments on air quality,  traffic and epidemics datasets, we conclude that, with a limited amount of computation, Bayesian methods are typically more robust in mean predictions, while frequentist methods are more effective in  covering variations in the data.
It is of interest to understand how to best combine Bayesian credible intervals with frequentist confidence intervals to excel in both mean predictions and confidence bounds.
Another future direction worth exploring is how to explicitly make use of the spatiotemporal structure of the data in the inference procedures.
Current methods we analyze treats minibatches of data as i.i.d. samples.
For longer time series with a graph structure, it would be interesting to leverage the spatiotemporal nature of the data to construe efficient inference algorithms~\citep{ma2017HMM,aicher2019,alaa2020frequentist}. One limitation of current method is that the confidence interval does not reflect the prediction accuracy. We hypothesize this is due to the limited samples and complexity of the dynamics.

%% If your work has an appendix, this is the place to put it.
\section*{Acknowledgement}
This work was supported in part by AWS Machine Learning Research Award, Google Faculty Award, NSF Grant \#2037745, and the DARPA W31P4Q-21-C-0014. D.W. acknowledges support from the HDSI Ph.D. Fellowship. M.C. and A.V. acknowledge support from grants HHS/CDC 5U01IP0001137, HHS/CDC 6U01IP001137, and from Google Cloud Research Credits program. The findings and conclusions in this study are those of the authors and do not necessarily represent the official position of the funding agencies, the National Institutes of Health, or the U.S. Department of Health and Human Services. 
% \allen{check with Yian and Alex}
%%
%% The next two lines define the bibliography style to be used, and
%% the bibliography file.
\bibliographystyle{ACM-Reference-Format}
\bibliography{ref}

\clearpage
% \onecolumn

\appendix
% \section{Comparison among methods}
%  \begin{table*}[h]
%   \centering
%   \resizebox{\textwidth}{!}{
% \begin{tabular}{cccc|cccc}
%     \toprule                
%     Method & Parallel computing resource & Small sample & Asymptotic consistency  &  & Accuracy & Uncertainty  \\
%     \midrule
%  \texttt{Bootstrap} & 25 &  & \checkmark &  & \checkmark & \\
%  \texttt{Quantile} & 1 & & &  & \checkmark & \checkmark\\
%  \texttt{MIS} & 1 & & &  & \checkmark & \checkmark\checkmark \\
%  \texttt{MC Dropout} & 1 & & &  & \checkmark & \\
%  \texttt{SG-MCMC} & 25 & \checkmark & \checkmark & & \checkmark\checkmark & \checkmark \\
%     \bottomrule
%   \end{tabular}
%   }
%     \caption{Comparison of different deep uncertainty quantification methods for forecasts. Double check marks represent robustly highest performance in experiments. 
%  }
%  \label{tb:comparison-table}
% %   \vspace{-3mm}
% \end{table*}

\section{Theoretical Analysis}
\label{Appendix:theory}

\begin{proof}[Proof of Proposition~\ref{thm:mis}]
% In the large sample limit, MIS converges to the following expectation:
% \[
% MIS(u,l) = (u-l) + \frac{2}{\rho} \lrp{ \E{Z-u|Z>u} + \E{l-Z|Z<l} }.
% \]
Since the distribution $\mathbb{P}_Z$ of $Z$ has probability density $p_Z$,
\begin{align*}
\MIS_{\infty}(u,l;\rho) &= (u-l) \\
&+ \frac{2}{\rho} \lrp{ \int_u^\infty (z-u) p_Z(z) \rd z + \int_{-\infty}^l (l-z) p_Z(z) \rd z }.
\end{align*}
We demonstrate in the following that the minimum of $\MIS_{\infty}(u,l;\rho)$ is achieved when $u$ and $l$ define the $(1-\rho)$ confidence level.

For simplicity of exposition, we let $p_Z$ be symmetric around $0$ and let $l=-u$.
Then 
\[
\MIS_{\infty}(u;\rho) = 2u + \frac{4}{\rho} \lrp{ \int_u^\infty (z-u) p_Z(z) \rd z }.
\]
Setting
\[
0 = \frac{\rd}{\rd u} \MIS_{\infty}(u;\rho)
= 2 - \frac{4}{\rho} \lrp{ \int_u^\infty p_Z(z) \rd z },
\]
we reach the conclusion that the upper bound $u^*$ that achieves the minimum MIS satisfies: $\int_{u^*}^\infty p_Z(z) \rd z = \frac{\rho}{2}$.
Therefore, $[l^*,u^*]$ defines the $(1-\rho)$ confidence level.
\end{proof}

% It is worth noting that Proposition~\ref{thm:mis_finite} not only provides a consistency statement for the usage of MIS to evaluate confidence intervals given data, it also provides a simple computation method to obtain Bayesian credible intervals via minimization of 

\begin{proof}[Proof of Proposition~\ref{thm:mis_finite}]
% In our Bayesian inference tasks, we aim to establish credible intervals from samples from the posterior distribution.
% When large number of posterior samples are available, one can use bootstrap or directly compute histograms to estimate the credible intervals.
% In our case, however, generating posterior samples require allocating parallel computing resources. 
% Hence we leverage results established in Sec.~\ref{sec:MIS} to use less amount of samples to compute the credible intervals.

We think about minimizing the MIS score over lower and upper bounds $l$ and $u$, given samples $z_1,\cdots,z_N$ from the posterior distribution:
\begin{align}
(u,l) &= {\arg\min}_{u>l; u, l \in \real} \MIS_N(u,l) \nonumber\\
&= \underset{u>l; u, l \in \real}{\argmin} \Big\{ (u-l) + \frac{2}{\rho N}\sum_{i=1}^N ((z_i-u)\ind{z_i>u} \nonumber\\
&+ (l-z_i)\ind{z_i<l}) \Big\}. \nonumber
\end{align}
% Using derivatives in the weak sense, we obtain that 
% \begin{align}
% \frac{\partial}{\partial u} (z_i-u) \ind{z_i>u}
% = - \ind{z_i>u} + (z_i-u) \frac{\partial}{\partial u} \ind{z_i>u}.
% \end{align}
% Since 
% \begin{align}
% \int_a^b (z_i-u) \frac{\partial}{\partial u} \ind{z_i>u} f(u) \rd u
% = \lrb{ (z_i-u) \ind{z_i>u} f(u) }_a^b
% + \int_a^b \ind{z_i>u} f(u) \rd u
% - \int_a^b (z_i-u) \ind{z_i>u} f'(u) \rd u.
% \end{align}

We prove in the following that the minimum of MIS score is achieved when we sort $\{z_1,\cdots,z_N\}$ in an increasing order and take $l = z_{\lceil \rho \cdot N / 2 \rceil}$ and $u = z_{N - \lfloor \rho \cdot N / 2 \rfloor}$, which define the quantile of the empirical distribution formed by the samples $\{z_1,\cdots,z_N\}$.

We first note that $\MIS_N(u,l)$ is a continuous function with respect to $u$ and $l$.
Since $\MIS_N(u,l)$ is piece-wise linear, we simply need to check that 
\begin{align*}
\MIS_N(z_j^{-},l) &= (z_j-l) \\
&+ \frac{2}{\rho N} \sum_{i\neq j} \lrp{(z_i-u)\ind{z_i>u} + (l-z_i)\ind{z_i<l}} \\
&= \MIS_N(z_j^{+},l).
\end{align*}
The result holds similarly for $l$.

In what follows we prove that $\MIS_N(u,l)$ is decreasing in $l$ whenever $l \leq z_{\lceil \rho \cdot N / 2 \rceil}$.
Whenever $l \geq z_{\lceil \rho \cdot N / 2 \rceil}$, $MIS(u,l)$ is increasing in $l$.
We can use similar reasoning to prove that the minimum is achieved when $u = z_{N - \lfloor \rho \cdot N / 2 \rfloor}$.

Consider derivative of $\MIS_N(u,l)$ over $l$
\begin{align}
    \frac{\partial \MIS_N(u, l)}{\partial l}=-1 + \frac{2}{\rho\cdot N}\sum_{i=1}^N \frac{\partial\lrp{(l-z_i)\ind{z_i<l}}}{\partial l}. \label{partial_l_ZeroCondition1}
\end{align}
Since
\begin{align}
    (l-z_i)\ind{z_i<l} = \begin{cases}
        l-z_i & \text{if $z_i < l$} \\
        0 & \text{if $z_i \geq l$} 
  \end{cases},
\end{align}
% Consider the partial derivative of $(l-z_i)\ind{z_i<l}$, 
\begin{align}
    \frac{\partial\lrp{(l-z_i)\ind{z_i<l}}}{\partial l} = \begin{cases}
        1 & \text{if $z_i < l$} \\
        0 & \text{if $z_i \geq l$} 
  \end{cases}.
\end{align}
When $l \leq z_{\lceil \rho \cdot N / 2 \rceil}$, $\frac{2}{\rho\cdot N}\sum_{i=1}^N \frac{\partial\lrp{(l-z_i)\ind{z_i<l}}}{\partial l} \leq 1$. 
Hence $\MIS_N(u,l)$ is decreasing in $l$ whenever $l \leq z_{\lceil \rho \cdot N / 2 \rceil}$. 
Whenever $l \geq z_{\lceil \rho \cdot N / 2 \rceil}$, $\frac{2}{\rho\cdot N}\sum_{i=1}^N \frac{\partial\lrp{(l-z_i)\ind{z_i<l}}}{\partial l} \geq 1$, making $\MIS_N(u,l)$ increasing in $l$.
Therefore, $l = z_{\lceil \rho \cdot N / 2 \rceil}$ is the minimum of $\MIS_N(u,l)$. 

The above two facts complete the proof and conclude that 
\begin{align}
{\arg\min}_{u>l; u, l \in \real} \MIS_N(u,l)=(z_{N-\lfloor \rho \cdot N / 2 \rfloor}, z_{\lceil \rho \cdot N / 2 \rceil}),
\end{align}
which define the quantile of the empirical distribution formed by the samples $\{z_1,\cdots,z_N\}$.
\end{proof}

\section{Experiment Details}

\subsection{Point Estimation Model Implementation}
\label{app:model_detail}
For the ConvLSTM experiment on air quality dataset, we concatenate the interpolated PM2.5 at grid level with the weather data and use 24-hour sequence as input to forecast PM2.5 in the next 48 hours. The training data is from January 2nd 2017 to January 31st 2018. We split it into 72-hour sequences. For every sequence, the data is normalized for each channel (temperature, pressure, humidity, wind direction, wind speed, PM2.5). We use one month data from February 1st to February 28th in 2018 for validation and the following month data from March 1st to March 26th for testing, using the first 26 and 24 24-hour records respectively as the input to make the next 48 hour's PM2.5 forecast.

For the DCRNN model used in traffic dataset, we use  two hidden layers of RNN with 64 units. The filter type is a dual random walk and the diffusion step is 2. The learning rate of DCRNN is fixed at $1e^{-2}$ with Adam optimizer. We perform early stopping at 50 epochs. 

For the COVID-19 dataset, We use an encoder-decoder sequence to sequence learning framework in the DCRNN structure.
The encoder reads as input a $7 \times 50 \times 4$ tensor that comprises the daily residuals between the observed death number and the GLEAM forecasts for the $50$ US states over $4$ different prediction horizons.
It encodes the information in $7$ hidden layers. 
The decoder produces forecasts of the weekly residuals between the true death number and the GLEAM forecasts for each state for the following $4$ weeks.
%The sequence length of the encoder is 7 and the input is the ensemble daily residual between the observed data and the previous GLEAM predictions from one week ahead to four week ahead. The horizon of the decoder is 4. 
We perform autoregressive weekly death predictions (from one week ahead to four weeks ahead). 
% See Appendix~\ref{Appendix:Covid_UQ_setup} for further details. In the following, we report the average out-of-sample prediction performance of the model. 
% Lastly, we report our results using EpiWeeks to meet the COVID-19 weekly forecasting requirement by Centers forDisease Control and Prevention (CDC). 
% In Appendix~\ref{app:dcrnn_vs_deepgleam}, we also provide a comparison among DCRNN trained directly on daily death number \edits{(Deep)}, GLEAM , and the DeepGLEAM model.\\
%
The DCRNN model only has one hidden layer of RNN with 8 units to overcome the overfitting problem. The filter type is Laplacian and the diffusion step is 1. The base learning rate of DCRNN is $1e^{-2}$ and decay to $1e^{-3}$ at epoch 13 with Adam optimizer. We have a strict early stopping policy to deal with the overfitting problem. The training stops as the validation error does not improve for three epochs after epoch 13.
% \subsection{Sample complexity of bootstrap and SG-MCMC}
\begin{figure*}[t!]
    \centering
    \includegraphics[width=\textwidth]{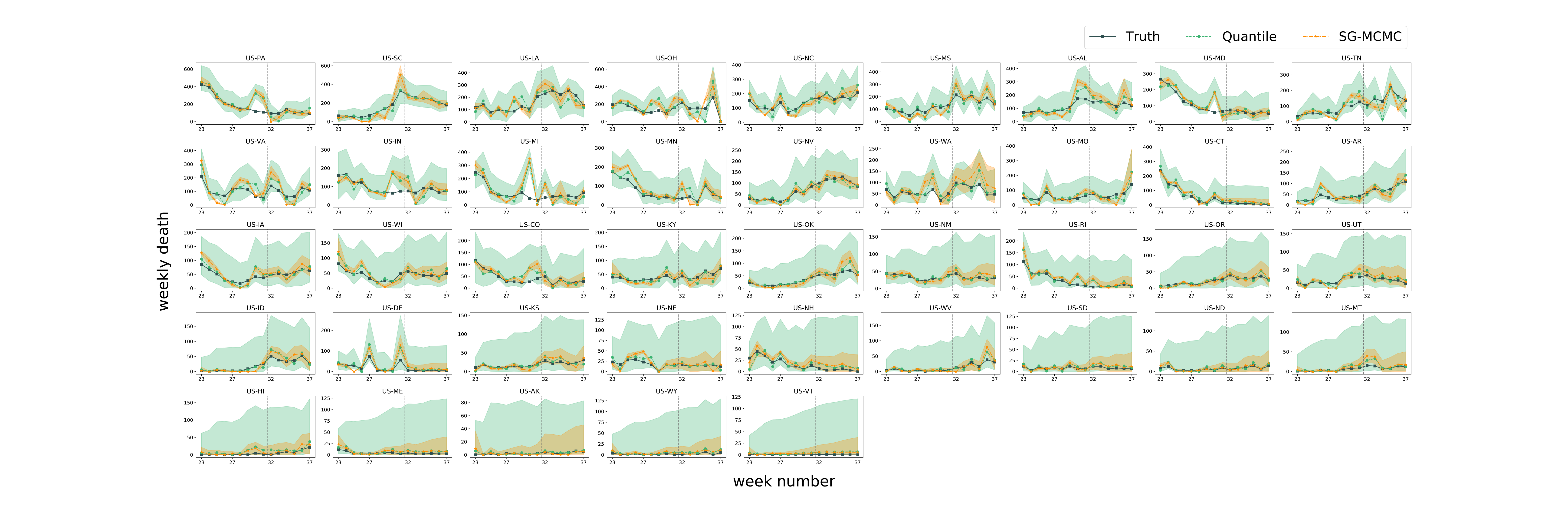}
    \caption{One week ahead COV-19 prediction for the rest of the  41 states in the United States. }
    \label{fig:covid_rest}
\end{figure*}
\begin{table*}[t!]
\centering
\caption{Performance comparison of different DeepGLEAM Models for COVID-19 mortality forecasting. }
\resizebox{0.9\textwidth}{!}{
% \scalebox{0.9} {
\begin{tabular}{c|c|cccccccc}

\toprule
Horizon $H$ & Metric & Point & Bootstrap & Quantile & SQ & MAE-MIS & MC Drpoout & SG-MCMC \\
% ====================== LA ======================
% 3 Steps
\midrule
\multirow{2}{*}{1W} & RMSE & 68.56 & 70.73 & 70.26 & 73.82 & 76.32 & 68.58 & \textbf{62.42}    \\ 
& MIS & --- & 1216.63 & 430.72 & 1299.78 & \textbf{424.49} & 831.86 & 715.40  \\ 
% \hline 
\midrule
% 6 Steps
% \midrule
\multirow{2}{*}{2W}& RMSE & 59.26 & 61.69  & 60.94 & 65.44 & 65.39 & 59.27 & \textbf{51.21}    \\ 
& MIS & --- & 1135.01 & \textbf{331.94} & 1272.82 & 368.20 & 796.54 & 727.27   \\ 
\midrule
% 12 Steps
\multirow{2}{*}{3W}& RMSE & 70.35 & 73.27 & 71.72 & 73.88 & 74.95 & 70.32 & \textbf{66.72}  \\ 
& MIS & --- & 1464.52 & \textbf{392.66} & 1559.62 & 416.23 & 997.65 & 930.55  \\ 
\midrule
% 12 Steps
\multirow{2}{*}{4W}& RMSE & 72.29 & 72.41 & 73.44 & 70.44 & 75.16 & 72.24 & \textbf{68.40}  \\ 
& MIS & --- & 1482.08 & \textbf{418.77} & 1586.95 & 468.49 & 1034.68 & 971.55 \\ 

\bottomrule

\end{tabular}}
\label{tb:ensemble_table}
\end{table*}

% \subsection{COVID-19 forecasting experiment}
\subsection{Global Epidemic and Mobility Model.} \label{app:covid_deepgleam}
\out{
\begin{figure}%{r}{0.45\textwidth}
    \centering
    \includegraphics[scale=0.6]{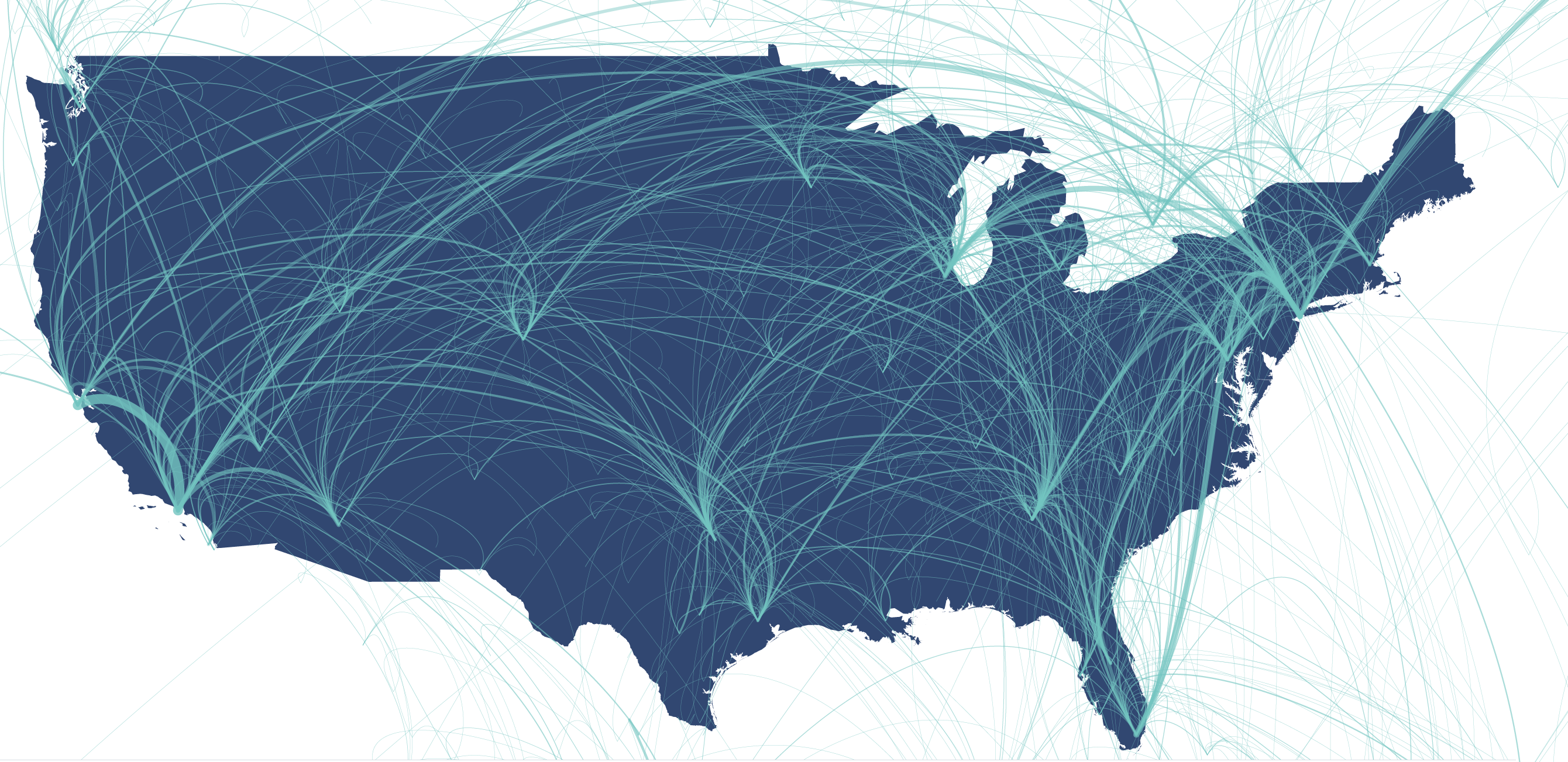}
    \caption{Original flight network connecting US airports. Data is aggregated at the State level to construct the network State-to-State graph.}
    \label{fig:flight}
 \vspace{-5mm}
\end{figure}
}
The Global Epidemic and Mobility model (GLEAM) is a stochastic, spatial, age-structured epidemic model based on a metapopulation approach which divides the world into over 3,200 geographic subpopulations constructed using a Voronoi tessellation of the Earth's surface. Subpopulations are centered around major transportation hubs (e.g. airports) and consist of cells with a resolution of approximately 25 x 25 kilometers \citep{balcan2009multiscale,balcan2010modeling,tizzoni2012real,zhang2017spread,chinazzi2020effect,davis2020estimating}. High resolution data are used to define the population of each cell. Other attributes of individual subpopulations, such as age-specific contact patterns, health infrastructure, etc., are added according to available data \citep{mistry2020inferring}.

GLEAM integrates a human mobility layer - represented as a network - that uses both short-range (i.e., commuting) and long-range (i.e., flight) mobility data from the Offices of Statistics for $30$ countries on $5$ continents as well as the Official Aviation Guide (OAG) and IATA databases (updated in $2021$). The air travel network consists of the daily passenger flows between airport pairs (origin and destination) worldwide mapped to the corresponding subpopulations. Where information is not available, the short-range mobility layer is generated synthetically by relying on the “gravity law”  or the more recent “radiation law”  both calibrated using real data \citep{Simini2012}. 

The model is calibrated to realistically describe the evolution of the COVID-19 pandemic as detailed in \cite{chinazzi2020effect,davis2020estimating}.
Lastly, GLEAM is stochastic and produces an ensemble of possible epidemic outcomes for each set of initial conditions. To account for the potentially different reporting levels of the states, a free parameter Infection Fatality Rate (IFR) multiplier is added to each model. To calibrate and select the most reasonable outcomes, we filter the models by the latest hospitalization trends and confirmed cases trends, and then we select and weight the filtered models using Akaike Information Criterion \citep{zhang2017forecasting}. The forecast of the evolution of the epidemic is formed by the final ensemble of the selected models.

\subsection{Additional Results}
Figure \ref{fig:covid_rest} visualizes the forecasts of Quantile regression (F) and SG-MCMC(B) for COVID-19 for the rest of the 41 states in U.S. Black solid lines indicate the ground truth. Green lines are the quantile regression predictions and orange lines are the forecasts from SG-MCMC. The shades represent the estimated 95\%confidence intervals. In general, we can see that SG-MCMC generates mean predictions much closer to the ground truth but often comes with narrower credible intervals

In addition to autoregressive forecasting, we also build an ensemble DeepGLEAM model by training based on the ensembled data along the forecasting horizon as the output.
We use the ensemble DeepGLEAM model which predicts the next 4 weeks death prediction together from the first decoder. Table \ref{tb:ensemble_table} shows the performance among the UQ methods, which shares similar results with the autoregressive model.

\end{document}